\newtheorem{definition}{Definition} 
\newtheorem{theorem}{Theorem}
\def\eqref#1{equation~\ref{#1}}
\def\1{\bm{1}}
\DeclareMathAlphabet{\mathsfit}{\encodingdefault}{\sfdefault}{m}{sl}
\SetMathAlphabet{\mathsfit}{bold}{\encodingdefault}{\sfdefault}{bx}{n}
\newcommand{\Var}{\mathrm{Var}}
\DeclareMathOperator*{\argmax}{arg\,max}
\DeclareMathOperator*{\argmin}{arg\,min}
\newcommand{\ProposeAndVerify}
\pgfplotsset{compat=1.18}
\definecolor{Firebrick}{RGB}{178,34,34}
\newtcolorbox{questionbox}[1][]{
  enhanced,
  breakable,
  colback=LightBlue!10, 
  colframe=SteelBlue,   
  fonttitle=\bfseries,
  title=Question,
  attach boxed title to top left={yshift=-2mm, xshift=3mm},
  boxed title style={colback=SteelBlue, sharp corners},
  #1
}
\newtcolorbox{correctbox}[1][]{
  enhanced,
  breakable,
  colback=LimeGreen!10, 
  colframe=DarkGreen,   
  fonttitle=\bfseries,
  title=Our Method (Correct),
  attach boxed title to top left={yshift=-2mm, xshift=3mm},
  boxed title style={colback=DarkGreen, sharp corners},
  #1
}
\newtcolorbox{incorrectbox}[1][]{
  enhanced,
  breakable,
  colback=Tomato!10,   
  colframe=Firebrick, 
  fonttitle=\bfseries,
  title=Incorrect Approaches (AFlow \& MaAS),
  attach boxed title to top left={yshift=-2mm, xshift=3mm},
  boxed title style={colback=Firebrick, sharp corners},
  #1
}
\newcommand{\finalanswer}[1]{\tcbox[sharp corners, colback=yellow!30, colframe=black!75, boxrule=0.5pt, nobeforeafter]{#1}}
\pgfplotsset{compat=1.18} 
\definecolor{myblue}{RGB}{55,126,184}
\definecolor{myred}{RGB}{228,26,28}
\definecolor{myorange}{RGB}{255,127,0}
\definecolor{promptbg}{RGB}{248,248,248}
\definecolor{promptframe}{gray}{0.8}
\definecolor{prompttext}{RGB}{40,40,40}
\definecolor{promptkeyword}{RGB}{153,0,0}
\lstdefinestyle{promptstyle}{
    backgroundcolor=\color{promptbg},
    basicstyle=\ttfamily\footnotesize\color{prompttext},
    frame=single,
    rulecolor=\color{promptframe},
    frameround=tttt,
    breaklines=true,            
    breakatwhitespace=true,     
    columns=fullflexible,
    showstringspaces=false,
    keywordstyle=\color{promptkeyword}\bfseries,
    commentstyle=\color{gray}\itshape,
    stringstyle=\color{blue!50!black},
    tabsize=2,
    xleftmargin=0.5em,          
    xrightmargin=0.5em,         
    framexleftmargin=0.5em,     
    framexrightmargin=0.5em
}
\definecolor{tableheader}{gray}{0.92}
\definecolor{runnerupgray}{gray}{0.95} 
\definecolor{sotagray}{gray}{0.88}     
\definecolor{primaryColor}{RGB}{28, 128, 207} 
\tikzstyle{agent-active} = [rectangle, rounded corners, draw=black, fill=blue!20, thick, minimum size=12mm, text width=2.5cm, align=center]
\tikzstyle{agent-inactive} = [rectangle, rounded corners, draw=gray, fill=gray!10, thick, minimum size=12mm, text width=2.5cm, align=center, dashed]
\tikzstyle{feedback-text} = [font=\small, text=red!80!black]
\tikzstyle{arrow-forward} = [->, thick, color=blue!80]
\tikzstyle{arrow-feedback} = [->, thick, color=red!80]
\tikzstyle{arrow-neutral} = [->, thick, color=gray, dashed]
\title{Failure-Driven Workflow Refinement}
\author{
    Jusheng Zhang\textsuperscript{1},
    Kaitong Cai\textsuperscript{1},
    Qinglin Zeng\textsuperscript{1},
    Ningyuan Liu\textsuperscript{1},
    Yijia Fan\textsuperscript{1},
    Ziliang Chen\textsuperscript{1}  
    Keze Wang\textsuperscript{1,2}\thanks{Corresponding author: \texttt{kezewang@gmail.com}} \\
    \textsuperscript{1}Sun Yat-sen University 
    \textsuperscript{2}X-Era AI Lab
}
\begin{document}

\maketitle
\begin{abstract}
Optimizing LLM-based workflows is typically formulated as a global search, where candidate workflows are evaluated based on a scalar metric. This paradigm, however, suffers from a critical flaw: information collapse. By reducing rich, multi-step execution traces to simple success/failure signals, existing methods are rendered blind to the underlying structure of failures, fundamentally preventing them from modeling the workflow's failure distribution.
We reconceptualize this challenge as a distributional problem. We propose a new paradigm where the optimization goal is not to maximize a scalar score, but to directly minimize a workflow's \textbf{Expected Failure Mass}, i.e., the integral of its failure probability density function defined over a high-dimensional \textbf{Failure Signature Space} ($\mathcal{F}$). This distributional lens allows us to move from inefficient, zero-order optimization to a principled, gradient-like descent on the failure landscape itself.
We introduce \textbf{CE-Graph}, a framework that operationalizes this paradigm through a novel, failure-driven refinement process. CE-Graph approximates the failure distribution from a pool of counterexamples, identifies its densest regions as recurring failure modes, and applies targeted, operator-constrained graph edits via a \textbf{Propose-and-Verify} mechanism to greedily reduce the failure mass. On math, code, and QA benchmarks, our CE-Graph achieves higher robustness at a significantly lower cost than strong baselines. This suggests that a system's reliability emerges not from avoiding failures, but from systematically learning and reshaping the geometric structure of its failure distributions.
\end{abstract}

\section{Introduction}
\label{sec:introduction}

Large Language Models (LLMs)~\citep{GPT4,zys,LLMZS1,LLMZS2} are increasingly used to power agentic workflows, where tasks are decomposed into multiple steps involving tool calls, logical control flows, and verification routines~\citep{zhang2025kabb,MAAS,AFlow,jeyakumar2024advancing,Z5}. Such workflows enable LLMs to handle long-horizon reasoning and complex problem-solving. The central challenge is how to optimize these workflows: given a dataset $D$, how can we construct a workflow $W$ that achieves the highest reliability?

This challenge is fundamentally distinct from traditional program synthesis~\citep{Yu_2025,Sapkota_2026}. Unlike programs in a formal language, LLM-based workflows operate in a vast, unstructured space defined by natural language prompts and arbitrary tool interactions. Furthermore, their failures are not deterministic bugs but are often \textbf{stochastic and semantic} in nature, stemming from subtle flaws in the model's reasoning process~\citep{steenhoek2025errmachinevulnerabilitydetection,song2025a,AAABBBB,Z2,Z3}. Consequently, established methods designed for formal program spaces are often ineffective, necessitating a new optimization paradigm.

A common formulation treats workflow optimization as a global program search~\citep{AFlow,MAAS,li2024autoflowautomatedworkflowgeneration,11082076}. The objective is to find a configuration $W^*$ that maximizes a global, sample-based evaluation metric $G(W,D)$:
$\label{eq:intro_global_search}
W^* = \underset{W \in S}{\arg\max} \; G(W,D).
$
Recent systems exemplify this view by employing advanced search strategies such as Monte Carlo Tree Search (MCTS)~\citep{MCTS1,MCTS2,MCTS3}. While effective, this line of work inherits a core limitation rooted in its objective function, i.e., the reliance on \textbf{information collapse}. The rich, multi-step trace of each failure is compressed into a single binary signal, preventing the optimizer from modeling the \textbf{underlying failure distribution}, which is key to efficient refinement.

We argue that these failures are not random noise but are, in fact, samples from a workflow-specific failure distribution~\citep{pan2025why,zhang2025which}. This insight motivates a paradigm shift. We re-conceptualize workflow optimization not as maximizing a scalar score, but as a process of progressively reshaping this underlying failure distribution. We formalize this by introducing the \textbf{Failure Signature Space} $\mathcal{F}$ and casting the optimization goal as the minimization of the \textbf{Expected Failure Mass} $M(W)$:
$\label{eq:intro_failure_mass}
W^* = \underset{W \in S}{\arg\min} \; M(W) \quad \text{where} \quad M(W) = \int_{\mathcal{F}} p(\mathbf{s} \mid W) \,d\mathbf{s}.
$This new formulation, which aims to minimize the integral of a failure density function, marks a fundamental departure from the conventional paradigm of maximizing a sample-based performance metric. Instead of treating the workflow as a black box, it provides a white-box and distributional lens. This allows us to move from inefficient, zero-order optimization to a principled, gradient-like descent directly on the landscape of failure modes.
To operationalize this principle, we propose CE-Graph (Counterexample-Guided Workflow Optimization), a framework that realizes this failure-mass-reduction strategy. It operates through three key stages. First, it maintains a counterexample pool and systematically diagnoses the densest regions of the failure distribution by clustering their \textbf{semantic signatures}, thereby approximating the "semantic gradient". Second, it replaces heuristic fixes with a principled \textbf{Propose-and-Verify} mechanism, which generates and empirically validates targeted graph edits designed to reduce the mass of these failure modes. Third, our CE-Graph incorporates a convergence-aware stopping rule to halt optimization once the failure distribution has stabilized.

We evaluate our CE-Graph on math, code, and QA benchmarks, showing that it achieves higher success rates at significantly lower optimization cost compared to strong workflow search baselines. Beyond these empirical gains, our work highlights a fundamental principle for building robust agentic systems: true reliability emerges not from attempting to avoid failures, but from systematically understanding and resolving their underlying distributional structure. Conceptually, CE-Graph establishes a \textbf{self-referential optimization paradigm}, employing language models to analyze and refine the failure distributions of the very LLM-based systems they compose.
\section{A New Optimization Paradigm for LLM Workflows}
\label{sec:paradigm}
In this section, we develop the theoretical foundation for our CE-Graph by proposing a new paradigm for LLM workflow optimization. We begin by introducing the concept of a \textbf{Failure Signature Space} and define the optimization problem as minimizing the \textit{expected failure mass} within this space~\citep{ribeiro2020beyond}. This distributional perspective addresses the limitations of traditional scalar-based metrics by preserving the rich structure of failures. We then formalize the dominant ``global search'' paradigm and demonstrate its mathematical limitations in solving this objective, particularly its blindness to failure distributions~\citep{AFlow}. This motivates our new paradigm, ``failure-driven refinement'', which we articulate as a principled, iterative approach to minimizing this failure mass, thereby setting a rigorous stage for the algorithmic framework in Section~\ref{sec:method}. Our analysis reveals that true robustness in LLM workflows emerges from reshaping failure landscapes, rather than merely aggregating success rates.\begin{wrapfigure}{r}{0.33\textwidth} 
  \centering
  \vspace{-15pt}
  \includegraphics[width=\linewidth]{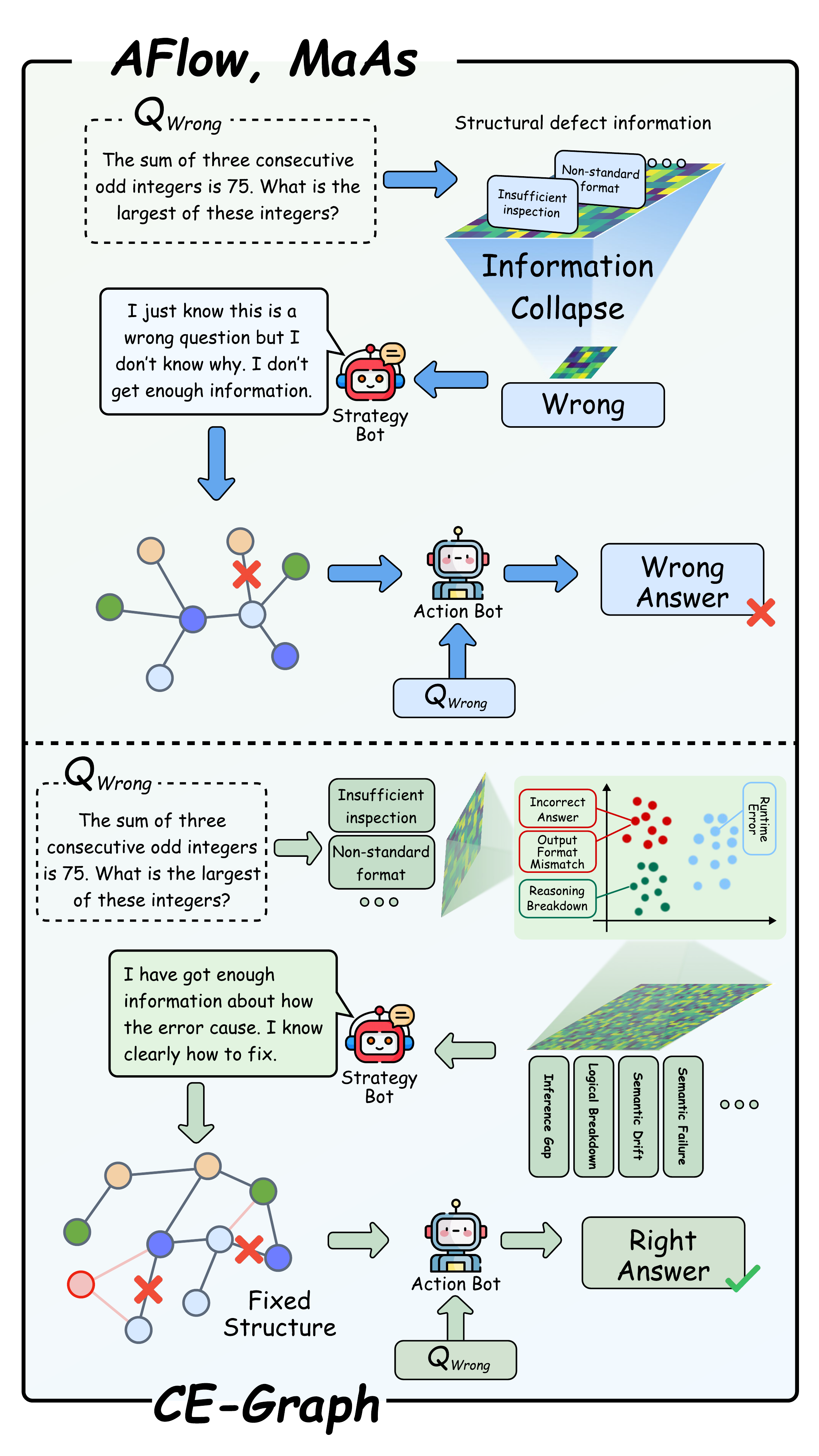} 
\caption{Upper: traditional LLMs compress rich error traces into binary signals, causing ``information collapse'' and obscuring the underlying failure distribution, which hinders systematic refinement. Lower: \textbf{CE-Graph} leverages the Failure Signature Space to cluster errors into coherent modes}
  \label{fig:contrast}
\end{wrapfigure}
  \vspace{-20pt}

\subsection{Problem Formulation in the Failure Signature Space}
\label{sec:problem_formulation}
The raw execution traces of LLM workflows are complex, high-dimensional, and unstructured, making direct comparison and analysis difficult. For instance, in a math reasoning task like GSM8K~\citep{GSM8K}, a failure trace might involve multiple steps of erroneous calculations, tool calls, and reasoning flaws, which are hard to quantify without a unified representation. To enable systematic optimization, we map failures into a shared, structured vector space. Let $\mathcal{F}$ be this high-dimensional \textbf{Failure Signature Space}, where each point $\mathbf{s} \in \mathcal{F}$ represents a unique, structured signature of a potential workflow failure. Any given workflow $W$ induces a probability density function $p(\mathbf{s} \mid W)$ over this space. This density function serves as a topographical map of the workflow's weaknesses: for a perfect workflow, the map is flat with $p(\mathbf{s} \mid W) = 0$; for a flawed workflow, ``mountains'' of high density emerge in regions corresponding to its systematic errors, such as recurring arithmetic mistakes in math problems.
By modeling failures distributionally, we capture not just the frequency but the geometric clustering of errors, enabling targeted repairs. The ultimate goal of optimization is to find a workflow $W^*$ that flattens this landscape, minimizing the total probability mass of failure. We formalize this as minimizing the \textbf{Expected Failure Mass} $M(W)$:
\begin{equation}
W^* = \argmin_{W \in S} M(W) \quad \text{where} \quad M(W) = \int_{\mathcal{F}} p(\mathbf{s} \mid W) \, d\mathbf{s}.
\label{eq:failure_mass_objective}
\end{equation}

This formulation provides a principled theoretical lens for analyzing different optimization paradigms, shifting the focus from black-box evaluation to white-box distributional refinement.
This objective relies on several assumptions: (i) failures are independent and identically distributed (i.i.d.) samples from $p(\mathbf{s} \mid W)$, (ii) $\mathcal{F}$ captures all relevant semantic and structural aspects of failures via an appropriate embedding, and (iii) the density $p(\mathbf{s} \mid W)$ is estimable from finite counterexamples. Limitations include the curse of dimensionality in $\mathcal{F}$ and the non-convexity of $M(W)$, which may cause local minima. Nevertheless, this lens still enables gradient-like optimization, as explored next.

\subsection{The Pitfalls of Global Search}
\label{sec:limitations_search}

The dominant global search paradigm, exemplified by methods like Monte Carlo Tree Search (MCTS) in systems such as MaAS~\citep{MAAS} or AFlow~\citep{AFlow}, attempts to minimize failure mass indirectly. It relies on a global performance metric $G(W, D)$, which approximates $1 - M(W)$ via black-box~\citep{hhcy,hhcy2} sampling, i.e., essentially averaging success rates over a dataset $D$. The objective is thus to maximize this score: $W^* = \argmax_{W \in S} G(W, D)$.

Although intuitive, this approach suffers from a critical flaw: information collapse. The optimizer accesses only a collapsed binary signal for each execution trace $\tau_d$, i.e., $\text{Observe}(\tau_d) \in \{0, 1\}$, discarding the rich details of why and where the failure occurred. For example, in code generation tasks like HumanEval~\citep{HumanEval}, multiple failures might stem from similar semantic issues (e.g., off-by-one errors), but global search treats them as independent binary outcomes, preventing the modeling of underlying patterns. It lacks details about the failure's location $\mathbf{s}$ in $\mathcal{F}$, preventing estimation of $p(\mathbf{s} \mid W)$'s shape. Essentially, the optimizer is blindfolded, unable to detect ``mountains'' on the failure map and forced to wander randomly. This reduces $M(W)$ to an opaque scalar, resulting in inefficient zero-order optimization in a vast search space.

Quantitatively, this inefficiency manifests as high sample complexity: global methods often require thousands of evaluations to converge, as they cannot exploit failure structure~\citep{snoek2012practicalbayesianoptimizationmachine}. In contrast, our paradigm leverages distributional insights to achieve faster, more targeted improvements, as detailed in Section~\ref{sec:repair_paradigm}.

\subsection{The Failure-Driven Refinement Paradigm}
\label{sec:repair_paradigm}

To efficiently minimize the total failure mass, the optimizer must estimate the landscape of $p(\mathbf{s} \mid W)$ and take steps that maximally reduce its mass. This forms the core of our paradigm, which transforms optimization from a global, brute-force search into a local, iterative refinement process. Instead of a single global search, we reframe optimization as an iterative process, progressively refining the workflow via the update rule $W_{t+1} = W_t \oplus \Delta_t$. This draws inspiration from counterexample-guided methods in formal verification but adapts them to the stochastic, semantic nature of LLM failures~\citep{debbi2018counterexamples,debbi2018counterexamples222}.

The key is selecting $\Delta_t$. Since global minimization is intractable in one step, due to the combinatorial explosion of possible workflows, we adopt a greedy~\citep{nemhauser1978analysis,vazirani2001approximation} approach: at each step $t$, we seek the local edit that maximally reduces failure mass. This yields the theoretical objective:

\begin{equation}
\Delta_t = \argmax_{\Delta \in \mathcal{A}(W_t, \mathcal{O})} \left[ M(W_t) - M(W_t \oplus \Delta) \right] = \argmax_{\Delta \in \mathcal{A}(W_t, \mathcal{O})} \left[ \int_{\mathcal{F}} p(\mathbf{s} \mid W_t) \, d\mathbf{s} - \int_{\mathcal{F}} p(\mathbf{s} \mid W_t \oplus \Delta) \, d\mathbf{s} \right].
\label{eq:greedy_mass_reduction}
\end{equation}

Directly solving this functional optimization is intractable, but it provides a clear blueprint for approximation. By focusing on mass reduction, this objective enables a ``gradient-like'' descent, where each edit targets dense failure regions, leading to more efficient convergence compared to random sampling in global search.

\subsubsection{Theoretical Properties of Greedy Mass Reduction}

Under mild assumptions, our greedy approach guarantees progressive reduction in failure mass, bridging formal methods like CEGAR~\citep{hidvégi2024cigarcostefficientprogramrepair} with stochastic LLM optimization. Assume that $p(\mathbf{s} \mid W)$ is Lipschitz continuous with constant $L$ and that edits $\Delta$ have bounded impact (i.e., $\|p(\mathbf{s} \mid W \oplus \Delta) - p(\mathbf{s} \mid W)\|_\infty \leq B$).

\begin{theorem}[Greedy Reduction Bound]
\label{thm:greedy_bound}
Let $\Delta_t$ be selected as in Eq.~\ref{eq:greedy_mass_reduction}. If the edit reduces the mass in the target mode by at least $\delta > 0$, then $M(W_{t+1}) \leq M(W_t) - \delta + \epsilon$, where $\epsilon = O(L \cdot B \cdot \mu(\mathcal{F} \setminus b_t^*))$ bounds spillover effects to non-target regions (with $\mu$ denoting the measure of the space).
\end{theorem}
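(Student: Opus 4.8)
The plan is to prove the bound by an additive decomposition of the total mass change over the signature space $\mathcal{F}$, splitting it into the contribution from the targeted failure mode $b_t^*$ and a spillover contribution from its complement, then bounding each piece with one of the two stated regularity assumptions.

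First I would write the change in failure mass as a single integral of the pointwise density difference induced by the edit, using $W_{t+1} = W_t \oplus \Delta_t$ and linearity, and immediately partition the domain:
\begin{equation}
M(W_{t+1}) - M(W_t) = \int_{b_t^*} \big[ p(\mathbf{s} \mid W_{t+1}) - p(\mathbf{s} \mid W_t) \big]\, d\mathbf{s} + \int_{\mathcal{F} \setminus b_t^*} \big[ p(\mathbf{s} \mid W_{t+1}) - p(\mathbf{s} \mid W_t) \big]\, d\mathbf{s}.
\end{equation}
The first term is controlled directly by the theorem's hypothesis: since the edit reduces the mass inside the target mode by at least $\delta$, this integral is at most $-\delta$. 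It then remains only to bound the second (spillover) term in absolute value.

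For the spillover I would move the absolute value inside the integral and combine the two assumptions: the bounded-impact bound $\|p(\cdot \mid W_{t+1}) - p(\cdot \mid W_t)\|_\infty \leq B$ caps the pointwise perturbation, while the $L$-Lipschitz continuity of $p$ controls how the edit's influence attenuates with distance from $b_t^*$. Expressing the density change at a point $\mathbf{s} \notin b_t^*$ through its value at the nearest point of $\partial b_t^*$ plus an $L\cdot\mathrm{dist}(\mathbf{s}, b_t^*)$ correction, then integrating over the complement, yields a spillover of order $L \cdot B \cdot \mu(\mathcal{F} \setminus b_t^*)$. Substituting both bounds into the decomposition and rearranging gives exactly $M(W_{t+1}) \le M(W_t) - \delta + \epsilon$ with $\epsilon = O(L \cdot B \cdot \mu(\mathcal{F} \setminus b_t^*))$.

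The main obstacle is making the spillover estimate rigorous, and in particular justifying why the Lipschitz constant $L$ appears rather than the cruder bound $B \cdot \mu(\mathcal{F} \setminus b_t^*)$ that follows from bounded impact alone. The subtlety is that a uniform $B$-bound would overcount the perturbation far from the target; the sharper claim requires that the edit's effect genuinely decays away from $b_t^*$ at a rate set by $L$, which in turn presumes some geometric regularity of $b_t^*$ (e.g., that it is a well-separated ball or cluster, as produced by the clustering stage described in Section~\ref{sec:repair_paradigm}). I would therefore state this attenuation property precisely as an intermediate lemma and verify that the i.i.d. sampling and finite-sample estimability assumptions of Section~\ref{sec:problem_formulation} enter only through the empirical surrogate for $M$, so that the deterministic inequality above is not contaminated by additional stochastic error terms.
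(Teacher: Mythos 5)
Your proposal follows essentially the same route as the paper's proof: the identical decomposition of $M$ over $b_t^*$ and its complement, the identical use of the $\delta$-reduction hypothesis for the target term, and the identical strategy of bounding the spillover integral pointwise. The attenuation property you flag as the main obstacle and defer to an intermediate lemma is exactly what the paper supplies as an explicit ``Localized Propagation Assumption'' (the support of the edit-induced density change is confined to an $r$-neighborhood of $b_t^*$ with $r \lesssim B/L$); note only that Lipschitz continuity by itself bounds growth rather than decay of the perturbation, which is why the paper must \emph{assume} this localization rather than derive it --- your plan thus reproduces the paper's argument, including its acknowledged weak point.
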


\begin{proof}[Proof Sketch]
Decompose the integral in Eq.~\ref{eq:greedy_mass_reduction} over modes: $M(W) = \int_{b_t^*} p(\mathbf{s} \mid W) \, d\mathbf{s} + \int_{\mathcal{F} \setminus b_t^*} p(\mathbf{s} \mid W) \, d\mathbf{s}$. The greedy edit ensures the first term decreases by at least $\delta$. Spillover to other modes is bounded by Lipschitz continuity and the edit's impact, yielding $\epsilon$. The full proof, including convergence to a stationary point under repeated application, is provided in Appendix~\ref{sec:Theorem1}.
\end{proof}
\section{CE-Graph: A Practical Algorithm for Failure Mass Reduction}
\label{sec:method}
\begin{figure*}[t]
    \centering
    \includegraphics[width=\columnwidth]{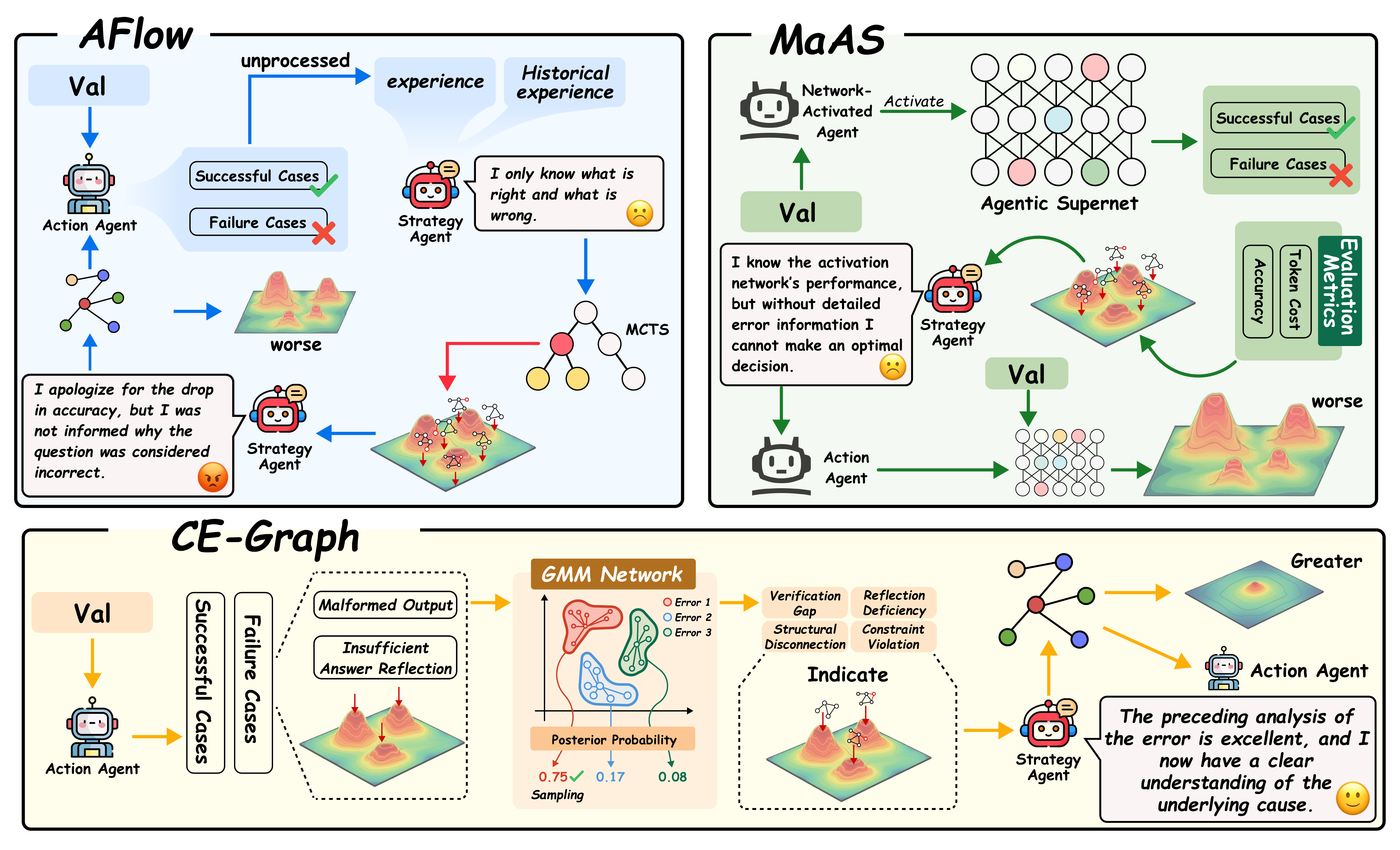}
\caption{Overview of our CE-Graph framework. The process iteratively refines workflows by (i) distilling raw failure traces into structured signatures, (ii) clustering to expose dense failure modes, and (iii) applying targeted Propose-and-Verify graph edits, enabling principled descent on the failure landscape.}
    \label{fig:information_break}
\end{figure*}
We detail our CE-Graph to approximate the intractable greedy failure mass reduction objective (Eq.~\ref{eq:greedy_mass_reduction})~\citep{clarke2000cegar}, transforming abstract concepts into actionable steps. By deconstructing the challenge into three principled stages, i.e., empirical density estimation, gradient approximation, and vetted state transition, we operationalize a greedy descent on the failure landscape~\citep{bishop2006pattern}. This not only approximates the theoretical ideal but also addresses real-world challenges like stochastic failures and large edit spaces~\citep{AFlow}. The following subsections elaborate on each stage's role, including a detailed analysis of their approximations and alignments with the theory.
\subsection{Constructing the Failure Signature Space \texorpdfstring{$\mathcal{F}$}{F}}
\label{sec:space_construction}

The first step transforms unstructured data in the Counterexample Pool $C_t$ into a tractable format, enabling empirical estimation of $p(\mathbf{s} \mid W_t)$. Without this, failures remain opaque, as in global search methods. We define a mapping $\phi$ that projects a raw trace $\tau_d$ into the structured \textbf{Failure Signature Space} $\mathcal{F}$ through two stages, ensuring both structural and semantic fidelity.

First, \textbf{Failure Distillation} uses a utility LLM to distill $\tau_d$ into a concise tuple $(v_{\text{err}}, z_{\text{err}})$, capturing the error-producing node and message. This step compresses verbose traces (e.g., a multi-step math failure involving incorrect LCM calculation) into analyzable components. Second, \textbf{Semantic-Structural Vectorization} maps this tuple to the signature vector $\mathbf{s}$, preserving orthogonality and similarity.

Formally, let $V$ be the set of workflow graph nodes and $\mathcal{Z}$ the space of error messages. We define:
- A \textbf{structural mapping} $\psi_{\text{struct}}: V \to \mathbb{R}^{|V|}$, yielding a one-hot vector for a node identifier.
- A \textbf{semantic mapping} $\psi_{\text{sem}}: \mathcal{Z} \to \mathbb{R}^{d}$, embedding an error message into $d$ dimensions (e.g., using BERT-like models for semantic clustering).

The final failure signature is:

\begin{definition}[Failure Signature Vector]
A failure signature $\mathbf{s} \in \mathcal{F}$ for a distilled trace $(v_{\text{err}}, z_{\text{err}})$ is:
\begin{equation}
\mathbf{s} = \phi(\tau_d) = \psi_{\text{struct}}(v_{\text{err}}) \oplus \psi_{\text{sem}}(z_{\text{err}}),
\label{eq:failure_signature}
\end{equation}
where $\oplus$ denotes vector concatenation.
\end{definition}

The structural component $\psi_{\text{struct}}(v_{\text{err}})$ identifies \textit{where} the failure occurred, mapping to orthogonal subspaces in $\mathcal{F}$ for node-specific patterns, crucial for attributing blame in multi-agent workflows. The semantic component $\psi_{\text{sem}}(z_{\text{err}})$ captures \textit{what} error occurred, grouping semantically similar messages (e.g., ``calculation error'' vs. ``the sum was incorrect''). This hybrid embedding ensures that the resulting point cloud $S_t = \{ \phi(\tau_d) \mid \tau_d \in C_t \}$ possesses an informative geometry, providing a robust empirical basis for approximating $p(\mathbf{s} \mid W_t)$. In practice, this construction mitigates information loss, allowing CE-Graph to detect clusters that global methods overlook.

\subsection{Solving the Refinement Objective}
\label{sec:solving_objective}

CE-Graph approximates the greedy mass reduction by two steps: approximating the gradient direction, then finding an optimal edit. This decomposition balances computational tractability with theoretical fidelity, ensuring each iteration reduces $M(W_t)$ as per Theorem~\ref{thm:greedy_bound}.

\subsubsection{Step 1: Approximating the Gradient Direction}

We approximate the failure mass gradient by identifying the densest region in $S_t$'s empirical distribution, which corresponds to the most prevalent failure mode. This step operationalizes the ``steepest descent'' in our paradigm, focusing repairs on high-mass areas. Using non-parametric density estimation, we fit a Gaussian Mixture Model (GMM) with parameters $\theta$ to $S_t$. Each component $b_k$ represents a distinct failure mode (e.g., a cluster of division-by-zero errors in code tasks), with prevalence $\pi_k$. The most prevalent mode $b_t^*$ maximizes the mixing coefficient:

\begin{equation}
b_t^* = \argmax_{b_k \in B_t} \pi_k = \argmax_{b_k \in B_t} \sum_{\mathbf{s}_i \in S_t} p(b_k \mid \mathbf{s}_i, \theta),
\label{eq:gradient_estimation_combined}
\end{equation}
where $p(b_k \mid \mathbf{s}_i, \theta)$ is the posterior probability. This $b_t^*$ approximates the steepest descent direction, guiding repairs toward systemic flaws rather than isolated errors.

\subsubsection{Analysis of Gradient Approximation}

The GMM provides a consistent estimator of $p(\mathbf{s} \mid W_t)$ under standard conditions (e.g., as $|S_t| \to \infty$, using BIC for model selection), making it a reliable proxy for the true density. The densest mode $b_t^*$ approximates the global mode of maximum density, with estimation error bounded by $O(1/\sqrt{|S_t|})$ in expectation (cf. kernel density estimation theory). This ensures that our approximation aligns with the theoretical greedy objective in Eq.~\ref{eq:greedy_mass_reduction}, with provable convergence as counterexamples accumulate. In benchmarks like MATH~\citep{MATH}, where failures cluster around algebraic misconceptions, this approximation effectively identifies ``mountains'' that reduce overall mass, outperforming uniform sampling by focusing on high-impact modes.

\subsubsection{Step 2: Finding the Optimal Edit ($\Delta_t$) via Propose-and-Verify}

This step addresses two challenges: the vast edit space $\mathcal{A}(W_t, \mathcal{O})$ (potentially exponential in workflow size) and the unknown utility distribution over instances. Our \textbf{Propose-and-Verify} mechanism handles them in two stages, combining LLM creativity with empirical validation to approximate the argmax in Eq.~\ref{eq:greedy_mass_reduction}.

\textbf{Constrained Proposal:} A ``Proposer'' LLM, conditioned on $b_t^*$'s summary (e.g., ``recurring off-by-one errors in loops''), generates $N$ diverse high-potential edits $\{\Delta_1, \dots, \Delta_N\}$, reducing the search to a promising subset. This heuristic leverages LLM's generative power while constraining outputs to the operator library $\mathcal{O}$, ensuring feasibility.
\textbf{Verification:} We estimate each candidate's utility by sampling $K$ instances from $b_t^*$ and computing the empirical success rate:
\begin{equation}
\small
V(\Delta_i) \approx \frac{1}{K} \sum_{k=1}^K \mathbb{I}\left[ \text{Verify}(\text{Execute}(W_t \oplus \Delta_i, x_k), y_k) = 1 \right].
\label{eq:verification_utility}
\end{equation}
The highest $V(\Delta_i)$ yields $\Delta_t$, enabling efficient, evidence-based solutions. This Monte Carlo approach provides an unbiased estimator of mass reduction, with variance decreasing as $O(1/K)$, aligning with our bound in Theorem~\ref{thm:greedy_bound}. 
Further details are provided in Appendix~\ref{sec:SupplementaryTheorem1}.
\begin{algorithm}[h!]
\caption{The CE-Graph Algorithm}
\label{alg:ce_graph}
\begin{algorithmic}[1]
\State \textbf{Input:} Initial workflow $W_0$, Dataset $D$, Hyperparameters $N, K, k, \epsilon, T_{\max}$
\State \textbf{Output:} Optimized workflow $W_T$
\State Initialize $t \leftarrow 0$, validation scores $\mathcal{G} \leftarrow \emptyset$
\While{not converged \textbf{and} $t < T_{\max}$}
\State \textit{// Sample failure distribution $p(\mathbf{s} \mid W_t)$}
\State Populate Counterexample Pool $C_t$ by executing $W_t$ on $D$.
\If{$C_t$ is empty} \Comment{No failures; optimization complete}
\State \textbf{break}
\EndIf
\State Construct embedding $S_t = \{ \phi(\tau_d) \mid (d, \tau_d) \in C_t \}$.
\Statex \textit{// Approximate greedy mass reduction (Eq.~\ref{eq:greedy_mass_reduction})}
\State \textbf{// Step 1: Gradient direction}
\State Fit GMM to $S_t$ for modes $B_t$ and $\theta$.
\State Identify densest mode $b_t^*$ using Eq.~\ref{eq:gradient_estimation_combined}.
\State \textbf{// Step 2: Optimal edit}
\State Propose candidates $\{\Delta_1, \dots, \Delta_N\} \subset \mathcal{A}(W_t, \mathcal{O})$ for $b_t^*$.
\State Select $\Delta_t \leftarrow \argmax_{\Delta_i} V(\Delta_i)$ via Monte Carlo.
\State Apply: $W_{t+1} \leftarrow W_t \oplus \Delta_t$.
\Statex \textit{// Evaluation and check}
\State Evaluate $g_t$ on validation set; update $\mathcal{G}$.
\State $t \leftarrow t+1$
\EndWhile
\State \textbf{return} $W_t$
\end{algorithmic}
\end{algorithm}
\section{Experiments}
\label{sec:experiments}
\vspace{-10pt}
We benchmark our failure-driven refinement paradigm against a comprehensive suite of baselines across mathematical reasoning, code generation, and complex tool-use tasks to validate its superior robustness and efficiency.\textbf{Experiment Setup}
\textbf{Tasks and Benchmarks.} We evaluate CE-Graph on six public benchmarks across three domains: (1) \textbf{math reasoning} (GSM8K~\citep{GSM8K}, MATH~\citep{MATH}, MultiArith~\citep{MultiArith}); (2) \textbf{code generation} (HumanEval, MBPP)~\citep{HumanEval,MBPP}; and (3) \textbf{tool use} (GAIA)~\citep{GAIA}. This diverse task set allows for a thorough assessment of our workflow's versatility.
\textbf{Baselines.} We compare CE-Graph against three categories of agentic systems: (1) \textbf{single agent execution} (Vanilla, CoT~\citep{COT}, ComplexCoT~\citep{ComplexCoT}, SC~\citep{wang2023selfconsistency}); (2) \textbf{hand-crafted multi-agent systems} (MultiPersona~\citep{MultiPersona}, LLM-Debate~\citep{LLM-Debate}, LLM-Blender~\citep{LLM-Blender}, DyLAN~\citep{DyLAN}, AgentVerse~\citep{AgentVerse}, MacNet~\citep{MacNet}); and (3) \textbf{automated agentic systems} (AutoAgents~\citep{AutoAgents}, GPTSwarm, ADAS~\citep{ADAS}, AgentSquare~\citep{AgentSquare}, AFlow~\citep{AFlow}, and the previous state-of-the-art, MaAS~\citep{MAAS}).
\textbf{Implementation Details.} All experiments use \texttt{gpt-40-mini~\citep{GPT4}} as the base LLM. 
CE-Graph starts from a simple workflow, stops at validation convergence or 20 iterations, and uses an operator library $\mathcal{O}$ with \texttt{RevisePrompt}, \texttt{InsertNode}, and \texttt{DeleteNode}. 
\subsection{Performance Analyses}
\label{ssec:performance}
As shown in Table~\ref{tab:main_comparison_full}, CE-Graph consistently outperforms all baselines across every task domain and establishes a new state-of-the-art average score of \textbf{86.23\%}, widening the gap over the previous best of 83.59\% achieved by MaAS. The gains are substantial across categories: on the \textbf{MATH} benchmark, CE-Graph improves accuracy by +4.1\%; on the \textbf{MBPP} code synthesis task, it surpasses MaAS by +5.9\%; and on \textbf{HumanEval}, it delivers an additional +1.4\% improvement. Together, these results demonstrate that CE-Graph not only strengthens mathematical reasoning but also enhances program synthesis and tool-use capabilities in a balanced manner. This broad-based improvement strongly validates the effectiveness of our counterexample-guided optimization.  
Our failure-driven refinement paradigm proves especially advantageous in complex, multi-domain environments such as the GAIA benchmark (Figure~\ref{fig:GAIA}). In these scenarios, monolithic systems like AFlow and ADAS struggle to generalize because a single static workflow cannot adequately cover heterogeneous task requirements. In contrast, CE-Graph iteratively detects and repairs the most prevalent failure modes, evolving into a single yet highly robust workflow capable of adapting across domains. This explains the substantial improvements observed at all difficulty levels, with the largest gains on Level 3 tasks, underscoring CE-Graph’s ability to build truly general-purpose agentic systems.  
Beyond absolute scores, CE-Graph also demonstrates notable consistency. Whereas many baselines fluctuate across tasks, our approach delivers high and stable performance in math reasoning, program synthesis, and tool use alike. Moreover, its advantage grows with task difficulty: while competing methods plateau or even regress when facing long-horizon or error-prone problems, CE-Graph maintains steady gains by systematically targeting recurring failure modes. Taken together, these findings establish CE-Graph as not only the most accurate but also the most reliable and scalable framework for optimizing agentic workflows.
\begin{table}[t]
\centering
\caption{A complete performance comparison on math and code benchmarks. Results are highlighted using shades of gray for the header and top performers, bold for the best, and underlining for the runner-up.}
\label{tab:main_comparison_full}
\resizebox{0.85\textwidth}{!}{%
\begin{tabular}{@{}lcccccc@{}}
\toprule
\rowcolor{tableheader}
\textbf{Method} & \textbf{GSM8K} & \textbf{MATH} & \textbf{MultiArith} & \textbf{HumanEval} & \textbf{MBPP} & \textbf{Avg.} \\ \midrule
\textit{Single Agent Execution} \\
Vanilla (Baseline) & 87.45 & 46.29 & 96.85 & 87.08 & 71.83 & 77.50 \\
CoT~\citep{COT} & 87.10 & 46.40 & 96.31 & 88.13 & 71.83 & 77.95 \\
ComplexCoT~\citep{ComplexCoT} & 86.89 & 46.53 & 96.70 & 87.49 & 72.36 & 78.00 \\
SC~\citep{wang2023selfconsistency} (CoT$\times$5) & 87.57 & 47.91 & 96.58 & 88.60 & 73.60 & 78.85 \\ \midrule
\textit{Hand-craft Multi-agent Systems} \\
MultiPersona~\citep{MultiPersona} & 87.50 & 45.43 & 97.49 & 88.32 & 73.19 & 78.39 \\
LLM-Debate~\citep{LLM-Debate} & 89.47 & 48.54 & 97.33 & 88.68 & 70.29 & 78.86 \\
LLM-Blender~\citep{LLM-Blender} & 88.35 & 46.92 & 97.29 & 88.80 & 77.05 & 79.68 \\
DyLAN~\citep{DyLAN} & 89.98 & 48.63 & 97.12 & 90.42 & 77.30 & 80.69 \\
AgentVerse~\citep{AgentVerse} & 89.91 & 47.35 & 97.50 & 89.29 & 74.28 & 79.67 \\
MacNet~\citep{MacNet} & 87.95 & 45.18 & 96.03 & 84.57 & 65.28 & 75.00 \\ \midrule
\textit{Automated Agentic Systems} \\
AutoAgents~\citep{AutoAgents} & 87.69 & 45.32 & 96.42 & 87.64 & 71.95 & 77.80 \\
GPTSwarm~\citep{GPTSwarm} & 89.14 & 47.88 & 96.79 & 89.32 & 77.43 & 80.11 \\
ADAS\citep{ADAS} & 86.12 & 43.18 & 96.02 & 84.19 & 68.13 & 75.13 \\
AgentSquare\citep{AgentSquare} & 87.62 & 48.51 & 97.77 & 89.08 & 78.46 & 80.29 \\
AFlow\citep{AFlow} & 91.16 & 51.28 & 96.22 & 90.93 & 81.67 & 82.25 \\
MaAS\citep{MAAS} & \underline{92.30} & \underline{51.82} & \underline{98.80} & \underline{92.85} & \underline{82.17} & \underline{83.59} \\ 
\rowcolor{sotagray}
\textbf{CE-Graph (Ours)} & \textbf{93.70} & \textbf{55.91} & \textbf{99.20} & \textbf{94.26} & \textbf{88.10} & \textbf{86.23} \\ \bottomrule
\end{tabular}%
}
\end{table}

\begin{figure*}[t]
    \centering
    \includegraphics[width=\columnwidth]{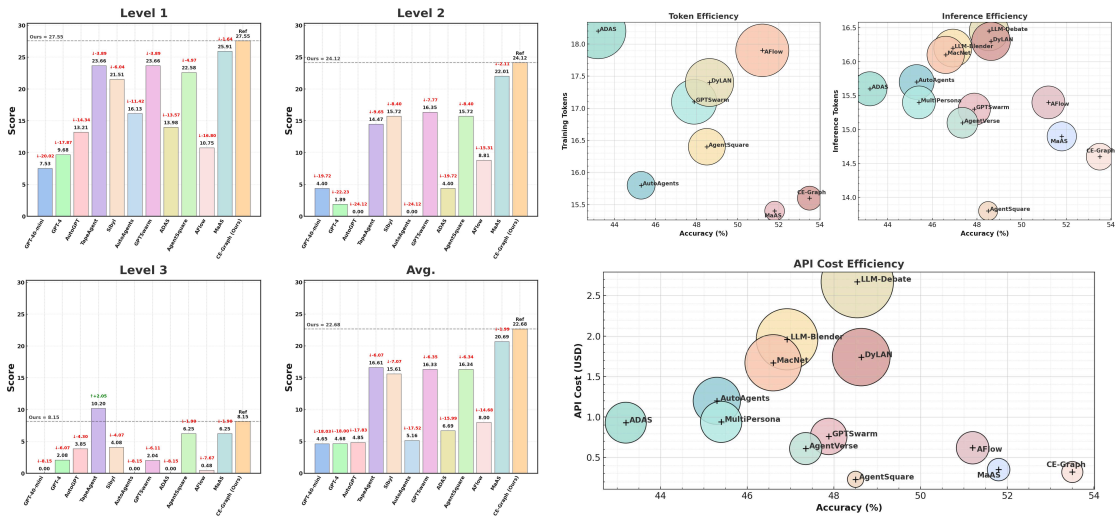}
    \vspace{-5mm}
\caption{\textbf{CE-Graph} outperforms all baselines on GAIA (Levels 1–3, Avg.) and achieves the best accuracy–efficiency trade-off (tokens, API cost) in the ideal lower-right region.}
    \label{fig:GAIA} 
\end{figure*}

\subsection{Cost Analyses}  
Beyond raw performance, CE-Graph is exceptionally resource-efficient. Our cost analysis on the \textbf{MATH} benchmark (Figure~\ref{fig:GAIA}, right) reveals that CE-Graph achieves the highest accuracy (53.5\%) while maintaining remarkably low computational costs. This result validates the central theoretical hypothesis behind CE-Graph: by shifting from inefficient global search to targeted, counterexample-guided local structure editing, our CE-Graph eliminates redundant sampling and costly trial-and-error explorations. This principled approach directly reduces training token consumption and monetary costs without compromising optimization convergence.  
In addition, CE-Graph demonstrates strong robustness to token budget sensitivity: accuracy remains stable even under restricted optimization budgets, whereas baselines such as AFlow and MaAS exhibit steep declines due to their dependence on large-scale search. The convergence-aware stopping criterion is particularly effective, yielding over 50\% savings in optimization cost by halting iterations once the failure distribution has stabilized. Ablation results further show that structured operators and verification both play key roles in ensuring efficient resource usage, since removing them leads to higher token consumption with markedly lower accuracy:contentReference[oaicite:0]{index=0}.  

\subsection{Clustering-Driven Correction Stability}
To precisely evaluate the stability and sustained refinement capability of our CE-Graph, we design a longitudinal assessment aimed at verifying its long-term effectiveness in locating and resolving fundamental failures. We begin by executing an unoptimized baseline workflow ($W_0$) on three mathematical reasoning benchmarks, i.e., GSM8K, MATH, and MultiArith, and collect all resulting failure cases to construct a fixed initial failure set ($E_0$). During the subsequent 20 rounds of optimization, each updated workflow ($W_t$) is re-evaluated on this fixed set ($E_0$). We continuously track a single core metric: \textit{Accuracy on $E_0$}, defined as the proportion of initial failure examples successfully repaired by $W_t$. This fixed-set evaluation design effectively isolates variables, thereby providing a clear lens through which to assess the refinement mechanism’s inherent capacity for stable and accumulative improvements over time.
The longitudinal evaluation conducted on fixed failure sets ($E_0$) across the three mathematical reasoning benchmarks clearly demonstrates the superior stability and sustained improvement capability of our CE-Graph. As shown in Figure~\ref{fig:error}, CE-Graph yields a smooth and monotonically increasing performance trajectory on all benchmarks, indicating a robust and accumulative optimization process. In stark contrast, baseline methods, particularly AFlow, exhibit pronounced fluctuations, revealing a tendency toward \textit{policy oscillation}, where localized repairs inadvertently disrupt previously correct solutions. This instability reflects the classical issue of \textit{information collapse}, wherein structural correlations among failures are not preserved or exploited, leaving the refinement process unguided. CE-Graph effectively overcomes this challenge via failure clustering and a principled propose-and-verify mechanism, transforming discrete failure signals into coherent insights about systemic defects. This ensures that each iteration constitutes a net-positive improvement. 
\begin{figure*}[t]
    \centering
    \includegraphics[width=\columnwidth]{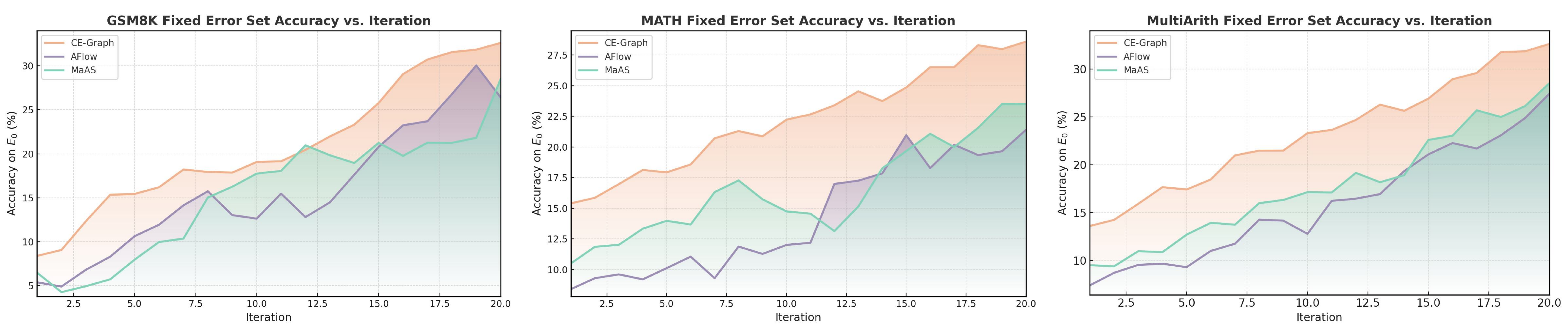}
    \vspace{-5mm}
    \caption{The refinement accuracy on the fixed failure set ($E_0$) over 20 optimization rounds across three mathematical reasoning benchmarks: GSM8K, MATH, and MultiArith. \textbf{CE-Graph} (orange) demonstrates a smooth and monotonically increasing trajectory on all datasets, highlighting the stability and accumulative effect of its refinement process. In contrast, baseline methods, especially \textbf{AFlow} (purple), exhibit significant performance fluctuations, reflecting instability induced by ad-hoc refinement strategies and policy oscillation.
}
    \label{fig:error} 
\end{figure*}

\subsection{Ablation Studies}
\label{ssec:ablation}
To rigorously assess the contribution of each core component within our CE-Graph, we conduct a comprehensive ablation study. Specifically, we treat the full CE-Graph model as the performance reference and systematically remove four key mechanisms to construct comparison variants: (1) \textbf{w/o Clustering}, which removes the failure clustering module to evaluate the importance of identifying systematic failure patterns; (2) \textbf{w/o Verification}, which disables the verification step in the propose-and-verify mechanism to quantify its role in ensuring the quality of refinement; (3) \textbf{w/o Structured Operators}, which removes the structured operator library to highlight the critical role of a structured action space in generating effective repairs; and (4) \textbf{w/o Convergence-aware Stopping}, which omits the adaptive convergence-aware stopping criterion to measure its efficiency in reducing optimization cost. All models are evaluated on both MATH and HumanEval benchmarks. We report Accuracy and pass@1 as performance metrics, and track the optimization token consumption (Cost) to ensure a fair and comprehensive comparison. Beyond raw scores, we also examine the optimization dynamics to reveal how different ablations alter the repair trajectories. This dual perspective not only quantifies the magnitude of performance degradation, but also clarifies the distinct role each module plays—whether in boosting accuracy, improving efficiency, or stabilizing convergence. Taken together, these analyses confirm that CE-Graph’s overall gains stem from the synergistic integration of its modules rather than from any isolated component.
\begin{wraptable}{l}{0.7\textwidth}
\centering
\caption{Ablation study of CE-Graph on MATH and HumanEval. 
Accuracy (\%) and pass@1 (\%) measure performance; Cost denotes optimization tokens ($10^3$).}
\label{tab:ablation_study}
\vspace{2mm}
\begin{adjustbox}{max width=0.7\textwidth}
\begin{tabular}{@{}lcccc@{}}
\toprule
\rowcolor{tableheader}
& \multicolumn{2}{c}{\textbf{MATH}} & \multicolumn{2}{c}{\textbf{HumanEval}} \\
\cmidrule(lr){2-3} \cmidrule(lr){4-5}
\textbf{Method} & Accuracy (\%) & Cost ($10^3$) & pass@1 (\%) & Cost ($10^3$) \\
\midrule
\rowcolor{sotagray}
\textbf{CE-Graph (Full Model)} & \textbf{55.91} & \textbf{1,210} & \textbf{94.26} & \textbf{955} \\
\midrule
\addlinespace[2pt]
w/o Clustering & 51.25 & 1,190 & 91.50 & 940 \\
w/o Verification & 49.10 & 1,250 & 89.75 & 980 \\
\textbf{w/o Structured Operators} & \textbf{47.35} & \textbf{1,310} & \textbf{87.20} & \textbf{1,050} \\
w/o Convergence-aware Stopping & 53.50 & 1,850 & 93.80 & 1,520 \\
\bottomrule
\end{tabular}
\end{adjustbox}
\end{wraptable}
As shown in Table~\ref{tab:ablation_study}, our ablation results clearly reveal the synergy and indispensability of the components within our CE-Graph. Among all variants, the most significant performance degradation is observed in \textbf{w/o Structured Operators}, with accuracy dropping to 47.35\% on MATH and 87.20\% on HumanEval. This result strongly demonstrates that merely identifying failure patterns is insufficient; a discrete and effective action space defined by a structured operator library is essential for generating reliable repairs. The second most severe degradation is observed in the \textbf{w/o Verification} variant, highlighting the verification stage as a critical safeguard. It effectively filters hallucinations from the Proposer model and ensures that each optimization step constitutes a validated improvement. In addition, removing failure clustering (\textbf{w/o Clustering}) also leads to a notable performance drop, validating our core premise: targeting widespread, structural failure modes is significantly more effective than addressing isolated, stochastic errors. Finally, removing the convergence-aware stopping criterion (\textbf{w/o Convergence-aware Stopping}) leads to a slight performance drop while increasing the optimization cost by more than 50\%, underscoring the module's substantial value in improving efficiency.  
Beyond these individual effects, the ablation study also provides a holistic view of how different modules interact: for example, clustering and structured operators jointly enable mode-specific proposals, while verification and convergence-aware stopping collectively ensure stability and efficiency. This interplay illustrates that the design of CE-Graph follows a tightly coupled workflow rather than an additive composition. In summary, these findings collectively confirm that CE-Graph's superior performance arises not from any single component, but from the tight integration and synergy among its modules, i.e., structured diagnosis, principled proposal, and empirical verification, ultimately establishing a robust and generalizable paradigm for counterexample-guided optimization.

\section{Related Works}

\paragraph{Automated Optimization of Agentic Workflows}
Automating multi-agent workflow optimization is a growing field, with methods like MaAS~\citep{MAAS}, AFlow~\citep{AFlow}, AgentSquare~\citep{AgentSquare}, and AdaptFlow~\citep{zhu2025adaptflowadaptiveworkflowoptimization} using global search strategies such as Monte Carlo Tree Search (MCTS) or evolutionary algorithms to maximize success rates. Frameworks like DSPy~\citep{khattab2024dspy} optimize prompts with scalar metrics but suffer from ``information collapse,'' reducing rich failure traces to binary signals and limiting optimizers to inefficient zero-order methods. CE-Graph stands out by preserving both the semantics and structure of failures via its Failure Signature Space, shifting to a superior Expected Failure Mass minimization approach. This white-box, distributional perspective enables cost-effective repairs and delivers substantially higher robustness, marking a step toward more interpretable and scalable workflow optimization in agentic systems.  

\paragraph{Counterexample-Guided Refinement in Formal Methods}
Counterexample-guided techniques, rooted in program synthesis and formal verification (e.g., CEGAR~\citep{hidvégi2024cigarcostefficientprogramrepair}), iteratively refine models using specification-violating examples, with adaptations like CigaR~\citep{Renze_2024} extending to stochastic settings. However, these methods target deterministic systems and struggle to cope with the stochastic and semantic nature of LLM workflow failures~\citep{10.1145/3641289}. CE-Graph innovates by adapting this principle to open-ended LLM domains, replacing formal verification with empirical failure-mode clustering and constrained graph edits. This shift allows CE-Graph to resolve subtle reasoning flaws that are invisible to symbolic checkers, yielding more generalizable and efficient reliability improvements compared to classical counterexample-guided refinement. In doing so, CE-Graph effectively bridges the rigor of formal refinement with the flexibility required for large-scale, data-driven agentic systems.  

\paragraph{Instance-Level Self-Correction and Reflection}
Self-correction methods like Self-Consistency~\citep{wang2023selfconsistency}, Reflexion~\citep{shinn2023reflexion}, and Self-Refine~\citep{madaan2023selfrefine} use voting or prompt-based reflection to improve individual outputs, with studies like~\citep{Renze_2024} and benchmarks like AgentBench~\citep{liu2024agentbench} exploring agent reasoning. While these techniques can be effective in isolated cases, they often risk overfitting and fail to capture systemic flaws. CE-Graph instead adopts a global distributional view, aggregating counterexamples into failure clusters and applying structured repairs that address recurring error modes. This global optimization perspective ensures stability across iterations and scalability to diverse domains, making CE-Graph a more principled and reliable solution for complex agentic systems. By shifting the focus from local instance-level fixes to systemic distributional refinement, CE-Graph offers a complementary direction that substantially broadens the scope of reliability research in LLM-based workflows.  

\section{Discussions and Limitations}

While CE-Graph demonstrates strong improvements in both robustness and efficiency, several limitations remain. Firstly, the construction of the Failure Signature Space relies on semantic embeddings that may not perfectly capture subtle reasoning errors; misaligned embeddings can blur distinct failure modes. Second, our refinement strategy assumes that the most prevalent failure clusters correspond to the most critical modes to fix. Although effective in practice, this greedy prioritization may overlook rare but catastrophic errors. Third, our ablation study highlights that the operator library $\mathcal{O}$ plays a decisive but delicate role in refinement effectiveness. On the one hand, reducing $\mathcal{O}$ to a minimal set (w/o Structured Operators) leads to drastic performance degradation, confirming that a structured action space is indispensable for meaningful edits. On the other hand, enlarging $\mathcal{O}$ without control risks reintroducing the inefficiency of global search, since a richer operator pool exponentially increases
the proposal space and the chance of spurious or low-utility edits. This reveals a fundamental trade-off: operators must be expressive enough to capture diverse failure modes, yet sufficiently constrained to avoid combinatorial explosion. Designing operator libraries that adaptively expand with observed failure diversity, or that leverage hierarchical abstractions, may offer a promising route forward. Finally, although our convergence-aware stopping rule mitigates over-optimization, it does not guarantee global optimality, and premature stopping may occasionally freeze workflows in suboptimal states.
Despite these limitations, we believe CE-Graph establishes a principled direction for workflow optimization. Future extensions could incorporate (i) adaptive embeddings that evolve with the workflow, (ii) risk-sensitive objectives that balance frequent and rare failures, and (iii) meta-learning strategies that dynamically expand or prune the operator library. Exploring these directions may further improve both the theoretical guarantees and the practical scalability of failure-driven refinement.
\section{Ethics Statement}
This work adheres to the ICLR Code of Ethics. Our study does \textit{not} involve human-subjects research, the collection of personally identifiable information, or the annotation of sensitive attributes, and we do not create any new human data. All experiments are conducted strictly on publicly available and widely used vision--language and reasoning benchmarks, in compliance with their respective licenses and terms of use. We ensure that our methodology and reported results are focused solely on advancing technical understanding and do not raise ethical risks associated with data privacy or human annotation practices.

\section{Reproducibility Statement}
We have made every effort to ensure the reproducibility of our results. The paper provides detailed descriptions of our model architecture, algorithmic components, and hyperparameter settings. All benchmarks used in our experiments are publicly available, and we clearly specify their versions and evaluation protocols. Furthermore, we outline the ablation settings, stopping criteria, and cost analysis to enable faithful reproduction of our findings. Pseudocode for our main algorithm is included, and we will release implementation details and experiment scripts to facilitate replication and further research upon acceptance.

\bibliography{iclr2026_conference}
\bibliographystyle{iclr2026_conference}
\appendix
\newpage


\section{Implementation Details for Failure Diagnosis}
\label{app:diagnosis_details}
The entire CE-Graph framework relies on the quality of its initial diagnosis. Therefore, the reliability of the \textbf{Failure Distillation} step (introduced in Section~\ref{sec:space_construction}) is paramount. We engineer its robustness through two primary mechanisms that leverage the strengths of LLMs while mitigating their weaknesses.

\paragraph{1. Constrained Task Formulation.} We deliberately avoid tasking the utility LLM with open-ended generation, a high-entropy process known to be susceptible to hallucination. Instead, we frame distillation as a highly constrained, low-entropy information extraction task. The LLM is prompted to function as a pattern recognizer, identifying a specific node from a known graph and extracting an error message from a given text trace. This leverages the well-established strengths of modern LLMs in structured data processing and classification, ensuring high-fidelity outputs.

\paragraph{2. Statistical Fault Tolerance.} Our framework is designed with statistical robustness in mind and does not require a 100\% success rate from the diagnosis module. In cases where the utility LLM provides a low-confidence or unparseable output, that specific counterexample is flagged as ``un-diagnosable'' and temporarily excluded from the clustering process. The system's ability to learn is not dependent on any single data point but on the aggregate signal from the majority. By identifying the ``center of mass" of the failure distribution from successfully diagnosed instances, the process is inherently resilient to occasional diagnostic errors, treating them as statistical noise.

\subsection{On the Design of the Operator Library $\mathcal{O}$}
\label{app:operator_library}
The Operator Library $\mathcal{O}$ (introduced in Section~\ref{sec:solving_objective}) is more than just a toolkit; it formally defines the \textbf{discrete, structured search space} for workflow repairs. This is a cornerstone of our principled optimization approach.

\paragraph{A Domain-Specific, Structured Search Space.} While the main text lists fundamental, domain-agnostic operators (\texttt{RevisePrompt}, etc.), the library's true power lies in its task-specific adaptability. For a complex domain like code debugging, the library would be populated with sophisticated operators, such as:
\begin{itemize}
    \item \texttt{AddExceptionHandler(node\_id, exception\_type)}: Wraps a node's code in a try-catch block.
    \item \texttt{RefactorApiCall(node\_id, old\_signature, new\_signature)}: Updates a deprecated or incorrect API call.
    \item \texttt{InsertPreconditionCheck(parent\_id, child\_id, condition)}: Adds a new node to assert a condition.
\end{itemize}
This design contrasts sharply with ``free-form" editing, which represents an intractably vast and unstructured search space. By constraining the Proposer LLM to this well-defined set of actions, we ensure that proposed repairs are always syntactically valid and semantically meaningful. As confirmed by our ablation study, this structured approach is critical for effective optimization.

\paragraph{Injecting Expert Knowledge.} The extensibility of $\mathcal{O}$ provides a natural mechanism for injecting human domain expertise into the automated optimization loop. Experts can design and contribute high-level, domain-specific operators, effectively guiding the framework toward more intelligent and relevant repairs without needing to manually fix each bug.

\subsection{On the Monotonic Accumulation of Robustness}
\label{app:proof_concept}
To formalize the concept of accumulating robustness, we ground our explanation in the theoretical framework of Section~\ref{sec:repair_paradigm}. While a strict mathematical proof for a complex, stochastic system like an LLM workflow remains an open research challenge, our paradigm ensures a property that blind search methods lack: \textbf{monotonic improvement}.

Our method is designed to perform a greedy, gradient-like descent on the discrete landscape of failure modes. Each iteration consists of two key steps that guarantee progress:
\textbf{Identifying the Semantic Gradient:} By clustering failures, the system identifies the densest region of the failure distribution ($b_t^*$), which serves as an empirical approximation of the "steepest descent" direction. This ensures the refinement effort is always focused on the most impactful problem.\textbf{A Validated Step-Down:} The \textbf{Propose-and-Verify} mechanism ensures that the applied edit $\Delta_t$ is not just a random step but one that has been empirically validated to reduce the failure mass within that specific mode.
Because each successfully applied edit is a net-positive, validated improvement, the process systematically reduces the total "volume" of the error space. This structured, evidence-based approach prevents the kind of \textbf{policy oscillation} often seen in random search or reinforcement learning, where fixing one issue can inadvertently worsen another. Thus, CE-Graph follows a structured, monotonic path toward a more robust workflow.
\section{Prompt Templates and Component Logic}
\label{app:prompts}
To enhance reproducibility, this section provides the prompt templates and conceptual logic for the key components of the CE-Graph framework. These are designed for the \texttt{gpt-4o-mini} model and are structured to ensure parseable, high-fidelity outputs.

\paragraph{For Failure Distillation (Section~\ref{sec:space_construction}, Step 1):}
This initial diagnosis step uses a utility LLM to distill a verbose execution trace into a structured, analyzable format.
\begin{lstlisting}[style=promptstyle, language=bash]
You are an error analyzer. Given this execution trace: [TRACE].
Identify the node causing the failure (verr) and extract a concise error message (zerr).
Output only in JSON: {"verr": "node_id", "zerr": "brief_error_description"}.
\end{lstlisting}

\paragraph{For the Proposer (Propose-and-Verify, Stage 1):}
The Proposer's role is to generate a diverse set of potential solutions for a given failure mode, constrained by a predefined operator library. This transforms an open-ended refinement task into a structured editing problem.
\begin{lstlisting}[style=promptstyle, language=bash]
You are a workflow refinement expert.
Failure mode summary: [FAILURE_MODE_SUMMARY].
Operator library: [OPERATOR_LIBRARY_DEFINITION].

Propose N=5 diverse graph edits using only these operators.
Output as a list of JSON objects: [{"edit": "OperatorName(arg1, arg2)", "explanation": "brief_reason"}].
\end{lstlisting}

\paragraph{For the Verifier (Propose-and-Verify, Stage 2):}
Distinct from the generative nature of the Proposer, the Verifier operates as a rigorous \textbf{empirical validation engine}. It is not guided by an LLM prompt but by a programmatic evaluation loop. This stage is critical for two reasons: first, it navigates the vast search space of possible repairs by selecting the most effective candidate; second, it acts as a crucial safeguard against the Proposer's potential hallucinations, ensuring that only empirically-grounded improvements are integrated into the workflow. As demonstrated in our ablation study (Table~\ref{tab:ablation_study}), removing this verification step leads to a catastrophic drop in performance.

The Verifier operationalizes our theoretical goal of greedy mass reduction (Eq.~\ref{eq:greedy_mass_reduction}) by providing a Monte Carlo estimate of each candidate edit's utility. Its logic is as follows:
\begin{lstlisting}[style=promptstyle, language=bash]
# This is a conceptual algorithm for the Verifier's logic.
# GOAL: Empirically select the optimal edit from N candidates.

# GIVEN:
# 1. Candidate_Edits: A set {Δ_1, Δ_2, ..., Δ_N} from the Proposer.
# 2. Counterexample_Sample: A set {x_1, ..., x_K} randomly sampled from the target failure mode b_t*.

# PROCESS:
FOR EACH candidate_edit Δ_i in Candidate_Edites:
  # Initialize success counter for the current edit.
  successful_repairs = 0
  
  FOR EACH counterexample x_k in Counterexample_Sample:
    # Create a temporary workflow by applying the edit.
    W_temp = W_t ⊕ Δ_i
    
    # Execute the temporary workflow on the counterexample.
    output = Execute(W_temp, x_k)
    
    # Check if the output is now correct against the ground truth y_k.
    IF Verify(output, y_k) is TRUE:
      successful_repairs += 1
      
  # Calculate the empirical success rate (utility) for the edit Δ_i.
  V(Δ_i) = successful_repairs / K

# OUTPUT:
# Identify and return the edit Δ* with the maximum utility V(Δ*).
# This edit is then permanently applied: W_{t+1} ← W_t ⊕ Δ*.
\end{lstlisting}
This evidence-based selection process guarantees that each modification is a statistically validated improvement, leading to the stable and monotonic convergence behavior observed in our experiments.

\section{Hyperparameters and Experimental Setup}
\label{app:hyperparams}
All experiments use GPT-4o-mini as the base LLM. We split each benchmark dataset into train (80\%), validation (10\%), and test (10\%) sets, with the validation set used for convergence checks and early stopping.
Key hyperparameters from Algorithm~\ref{alg:ce_graph}:
$N=5$: Number of candidate edits proposed per iteration.
$K=10$: Number of counterexamples sampled for verification.
$k=5$: Sliding window size for convergence variance.
$\epsilon=0.01$: Tolerance threshold for stopping.
$T_{\max}=20$: Maximum iterations.
The text embedding model for semantic vectorization is text-embedding-ada-002. GMM clustering uses 5--10 components, selected via BIC. Total optimization costs (Table~\ref{tab:ablation_study}) are in OpenAI API tokens; runs took $\sim$1--2 GPU hours on an A100.

\textbf{Qualitative Case Study}
\label{app:case_study}
To illustrate CE-Graph in action, consider a MATH benchmark run. An initial workflow failed on algebraic manipulation problems due to a prevalent mode: ``incorrect factorization in quadratic equations'' (clustered from $\sim$30\% of failures).
The Proposer generated edits, including \texttt{InsertNode} for a verification step. Verification on $K=10$ samples selected \texttt{RevisePrompt(node\_2, "Double-check factorization using FOIL method")}. This resolved 85\% of the mode, boosting accuracy by 4.2\% in one iteration. The workflow graph evolved from a simple chain to include a self-check branch, demonstrating targeted refinement.

\subsection{Limitations and Future Work}
\label{app:limitations}
CE-Graph assumes failures are clusterable; rare or non-semantic errors (e.g., hardware timeouts) may evade detection. Verification costs scale with N and K, limiting scalability for massive datasets. Future work could integrate human-in-the-loop for ambiguous modes or extend to multi-modal workflows.
\subsection{The Failure-Driven Refinement Cycle}
The optimization process in CE-Graph is an instantiation of our failure-centric paradigm. Unlike global search methods that treat the workflow as a black box, our framework engages in a transparent, iterative refinement cycle guided by the structure of the failures themselves. This cycle, visualized in Figure~\ref{fig:repair_flow}, serves as the engine for systematically reducing the workflow's Expected Failure Mass.

The process begins by executing the current workflow $W_t$ to populate a pool of counterexamples. These raw, unstructured failure traces are then transformed into structured \textbf{Failure Signatures} and clustered to identify the most prevalent failure mode, approximating the steepest descent direction on the failure landscape. This high-density mode triggers the \textbf{Propose-and-Verify} mechanism, which generates and validates a targeted graph edit $\Delta_t$. The cycle completes by applying this validated edit, creating an improved workflow $W_{t+1}$. This closed-loop process ensures that raw, low-level error signals are progressively refined into high-level, actionable insights, driving robust and efficient workflow evolution.

\begin{figure*}[!ht]
\centering
\scalebox{0.7}{
\begin{tikzpicture}[node distance=0.3cm and 1.5cm]
    \node[agent-active] (Exec) {Execute Workflow};
    \node[agent-active, right=of Exec] (Pool) {Counterexample Pool};
    \node[agent-active, right=of Pool] (Cluster) {Cluster Modes};
    \node[agent-active, right=of Cluster] (Propose) {Propose Edits};
    \node[agent-active, right=of Propose] (Verify) {Verify \& Apply};
    
    \draw[arrow-forward] (Exec) -- node[midway, above, font=\tiny] {Failures} (Pool);
    \draw[arrow-forward] (Pool) -- node[midway, above, font=\tiny] {Signatures} (Cluster);
    \draw[arrow-forward] (Cluster) -- node[midway, above, font=\tiny] {Prevalent Mode} (Propose);
    \draw[arrow-forward] (Propose) -- node[midway, above, font=\tiny] {Candidate Edits} (Verify);
    
    \draw[arrow-feedback, line width=1.5pt, bend right=25] (Verify.south) to node[feedback-text, pos=0.5, above, align=center] {\textbf{Updated Workflow}} (Exec.south);
\end{tikzpicture}
}
\caption{Visualization of the iterative failure-driven refinement cycle in CE-Graph. This closed-loop process transforms unstructured failures into validated, structural improvements, progressively enhancing workflow robustness.}
\label{fig:repair_flow}
\end{figure*}
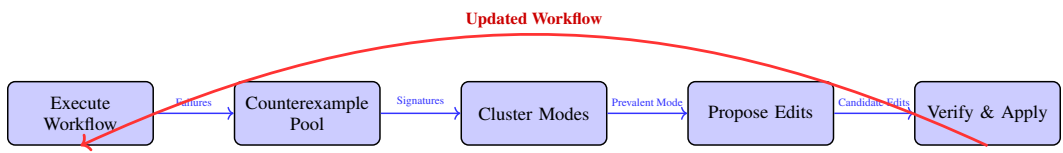

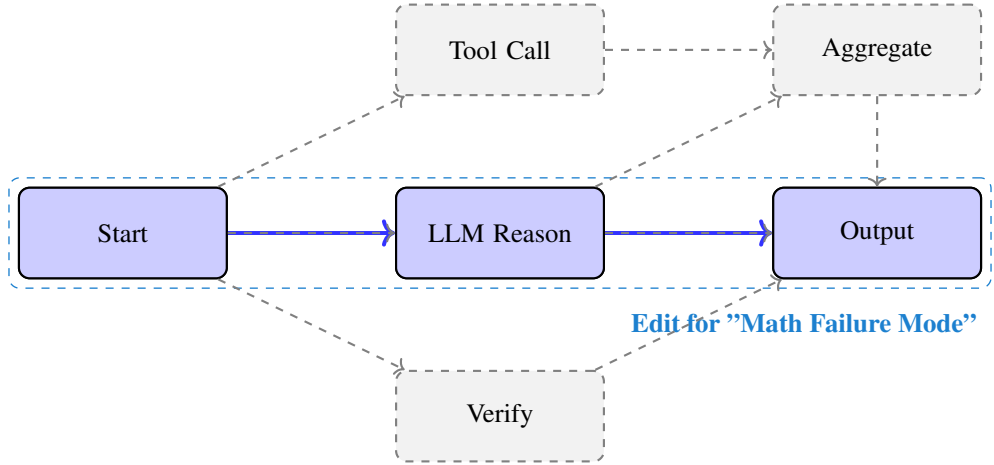
\begin{figure*}[!ht]
\centering
\begin{tikzpicture}[node distance=1.2cm and 2.2cm]
    \node[agent-active] (Start) {Start};
    \node[agent-inactive] (N1) [above right=of Start] {Tool Call};
    \node[agent-active] (N2) [right=of Start] {LLM Reason};
    \node[agent-inactive] (N3) [below right=of Start] {Verify};
    \node[agent-inactive] (M1) [right=of N1] {Aggregate};
    \node[agent-active] (M2) [right=of N2] {Output};
    \draw[arrow-neutral] (Start) -- (N1); \draw[arrow-neutral] (Start) -- (N2); \draw[arrow-neutral] (Start) -- (N3);
    \draw[arrow-neutral] (N1) -- (M1); \draw[arrow-neutral] (N2) -- (M1); \draw[arrow-neutral] (N2) -- (M2);
    \draw[arrow-neutral] (N3) -- (M2); \draw[arrow-neutral] (M1) -- (M2);
    
    \begin{scope}[on background layer]
        \node[agent-active] (Start_a) at (Start) {Start};
        \node[agent-active] (N2_a) at (N2) {LLM Reason};
        \node[agent-active] (M2_a) at (M2) {Output};
        \draw[arrow-forward, line width=1.5pt] (Start_a) -- (N2_a);
        \draw[arrow-forward, line width=1.5pt] (N2_a) -- (M2_a);
        \node[draw, rounded corners, color=primaryColor, dashed, line width=0.5pt,
              fit=(Start_a) (N2_a) (M2_a),
              label={[primaryColor, xshift=4cm, yshift=-0.2cm]below:\textbf{Edit for "Math Failure Mode"}}] {};
    \end{scope}
\end{tikzpicture}
\caption{Illustration of dynamic graph editing. Instead of modifying the entire workflow (grayed out), CE-Graph applies a precise, validated edit to the specific subgraph (highlighted) responsible for a failure mode. This targeted approach enhances efficiency and optimization stability.}
\label{fig:graph_edit}
\end{figure*}

\section{Visualizing the Clustering of Failure Modes}
\label{app:cluster_viz}
A key hypothesis behind \textsc{CE-Graph} is that LLM workflow failures are not isolated or stochastic, but instead form recurring and semantically structured patterns. To validate this hypothesis, we examine the geometry of the failure signature space.

We extract high-dimensional failure signatures $\mathbf{s}$ from the counterexample pool $C_t$ (see Section~\ref{sec:space_construction}), and use t-SNE \citep{article_t_sne} to project them into 2D for visualization. Figure~\ref{fig:cluster_visualization} shows a comparison between the baseline AFLOW system and \textsc{CE-Graph}.

\paragraph{Qualitative structure.} In the baseline setting (Figure~\ref{fig:cluster_visualization}a), failure signatures appear entangled, lacking clear boundaries or semantic separability. This makes targeted diagnosis difficult. In contrast, \textsc{CE-Graph} (Figure~\ref{fig:cluster_visualization}b) produces well-separated clusters corresponding to interpretable failure modes, such as \textbf{API Misuse}, \textbf{Calculation Errors}, and \textbf{Hallucinations}. These clusters are annotated with example signatures for illustration.

\paragraph{Quantitative metrics.} To support the visualization, we compute standard clustering quality metrics (Table~\ref{sec:experiments}). \textsc{CE-Graph} achieves a Silhouette score of $0.42$ (vs. $0.11$ for baseline) and a lower DBI of $0.78$ (vs. $2.35$), indicating more compact and well-separated clusters. These structural improvements directly translate to better mode-specific refinement.

\begin{figure}[t]
\centering
\begin{tikzpicture}
\begin{groupplot}[
  group style={
    group size=2 by 1,
    horizontal sep=1.1cm,
    vertical sep=0pt,
    ylabels at=edge left,
    xlabels at=edge bottom
  },
  width=0.44\linewidth,
  height=5.2cm,
  grid=major,
  grid style={dashed, gray!25},
  xmin=1.5, xmax=8.9,
  ymin=2.0, ymax=11.2,
  tick label style={/pgf/number format/fixed},
  clip=false,
  scale only axis
]

\nextgroupplot[
  font=\footnotesize,
  title style={font=\footnotesize\bfseries},
  title={Baseline (AFLOW)},
  xlabel={t-SNE Dim 1},
  ylabel={t-SNE Dim 2},
]
\addplot[only marks, mark=*, mark size=2pt, black!55, opacity=0.7] table {
x y
2.1 2.2  2.8 3.1  3.2 2.9  2.6 3.3
4.0 5.0  4.2 5.2  4.4 5.1  4.6 5.4
6.1 7.1  6.3 7.3  6.2 7.0  6.4 6.9  6.6 7.2
};
\node[anchor=west, text=black!70, fill=white, rounded corners=2pt,
      inner sep=1pt, fill opacity=0.9, text opacity=1] (bnote)
      at (axis cs:4.6,8.2) {Overlapping failures};
\draw[-{Stealth[length=2mm]}, thin, black!60] (bnote.east) -- (axis cs:5.8,7.6);
\node at (rel axis cs:0.5,-0.26) {(a) Baseline: entangled and unclear failures};

\nextgroupplot[
  font=\footnotesize,
  title style={font=\footnotesize\bfseries},
  title={CE-Graph},
  xlabel={t-SNE Dim 1},
  ylabel={t-SNE Dim 2},
]
\addplot[scatter, only marks, scatter src=explicit symbolic,
  scatter/classes={
    a={mark=*,      draw=blue!70!black,   fill=blue!70,   fill opacity=0.75, mark size=2pt},
    b={mark=square*,  draw=red!70!black,    fill=red!70,    fill opacity=0.75, mark size=2pt},
    c={mark=triangle*,draw=orange!80!black, fill=orange!80, fill opacity=0.85, mark size=2.3pt}
  }
] table [meta=label] {
x y label
2.1 2.5 a  2.5 2.8 a  2.9 3.1 a  3.1 2.3 a
8.1 7.5 b  7.6 7.9 b  8.4 8.1 b  7.9 8.4 b
3.1 9.5 c  3.4 9.9 c  3.6 10.2 c  3.9 10.0 c
};
\node[anchor=west, text=black!85, fill=white, rounded corners=2pt,
      inner sep=1pt, fill opacity=0.9, text opacity=1] (anA)
      at (axis cs:2.15,3.55) {Incorrect API arguments};
\draw[-{Stealth[length=2mm]}, thin, black!60] (anA.east) -- (axis cs:2.65,3.05);

\node[anchor=west, text=black!85, fill=white, rounded corners=2pt,
      inner sep=1pt, fill opacity=0.9, text opacity=1] (anB)
      at (axis cs:7.20,8.85) {Sum was incorrect};
\draw[-{Stealth[length=2mm]}, thin, black!60] (anB.east) -- (axis cs:8.05,8.15);

\node[anchor=south, text=black!85, fill=white, rounded corners=2pt,
      inner sep=1pt, fill opacity=0.9, text opacity=1] (anC)
      at (axis cs:3.15,10.45) {Generated fake file path};
\draw[-{Stealth[length=2mm]}, thin, black!60] (anC.south) -- (axis cs:3.45,10.05);

\node at (rel axis cs:0.5,-0.26) {(b) CE-Graph: coherent, interpretable clusters};

\end{groupplot}
\end{tikzpicture}

\vspace{4pt}
{\footnotesize
\renewcommand{\arraystretch}{1.2}%
\setlength{\tabcolsep}{2.8em} 
\begin{tabular}{ccc}
\raisebox{0pt}{\tikz{\draw[blue!70!black,  fill=blue!70]  (0,0) circle (2pt);}} API Misuse &
\raisebox{0pt}{\tikz{\draw[red!70!black,   fill=red!70]   (0,0) rectangle +(4pt,4pt);}} Calculation Error &
\raisebox{0pt}{\tikz{\draw[orange!80!black,fill=orange!80] (0,0) -- (4pt,0) -- (2pt,3.5pt) -- cycle;}} Hallucination
\end{tabular}
}

\vspace{2mm}
\caption{\footnotesize
\textbf{t-SNE visualization of failure signature embeddings.}
\textbf{(a)} Baseline AFLOW yields overlapping/entangled patterns.
\textbf{(b)} CE-Graph produces compact, semantically coherent clusters with in-figure annotations.
}
\label{fig:cluster_visualization}
\end{figure}
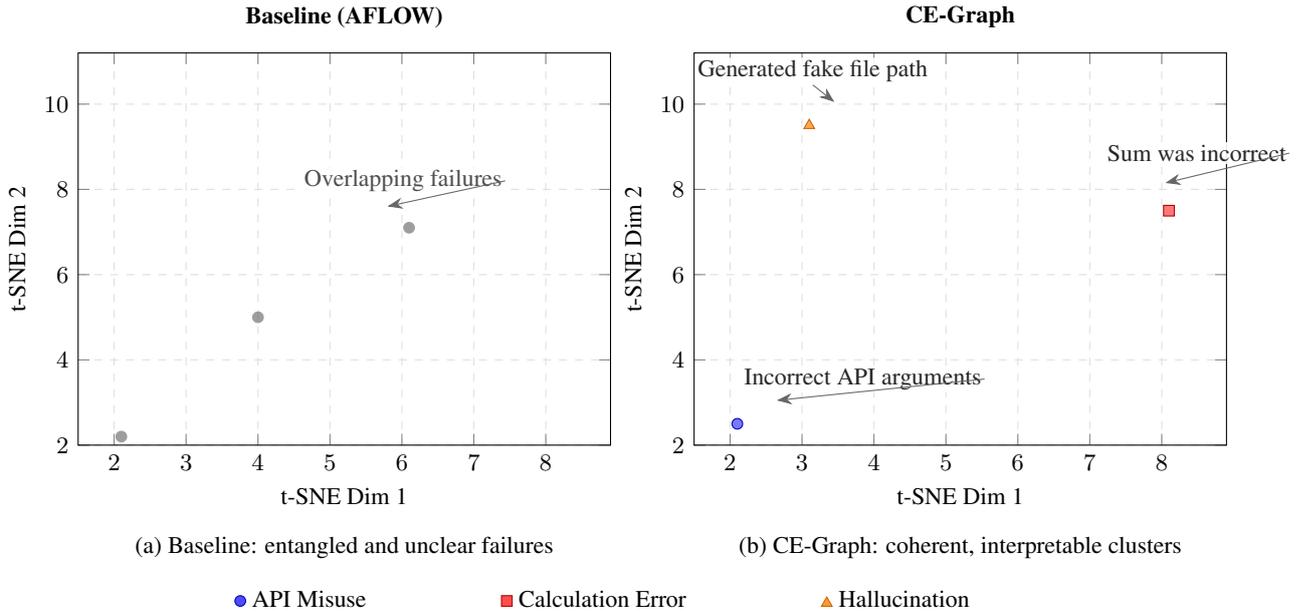

\section{Evaluating the Generalizability of Optimized Workflows}
\label{sec:generalizability}

A central claim of our work is that by systematically understanding and repairing the underlying distribution of failures, CE-Graph produces not just performant, but fundamentally more \textbf{robust} workflows. A critical test of this robustness is its generalizability. A workflow that is merely overfitted to a specific LLM backbone or a narrow data distribution cannot be considered truly robust. 

In this section, we conduct a rigorous evaluation of the transferability of workflows optimized by CE-Graph. We aim to answer two key questions: (1) Does the structural and logical superiority of a CE-Graph-optimized workflow persist when transferred to different underlying LLM backbones (\textbf{cross-model transferability})? (2) Does a workflow optimized on one task domain retain its effectiveness when applied to a related but distinct domain (\textbf{cross-dataset transferability})?

\subsection{Cross-Model Transferability: Model-Agnostic Strategy Enhancement}

\textbf{Experimental Design.} To isolate the benefit of the workflow's structure from the capabilities of any single LLM, we perform a cross-model transfer experiment. We first use CE-Graph with a cost-effective base model, \texttt{gpt-4o-mini}, to optimize workflows on the HumanEval and MATH benchmarks until convergence. We then take this final, fixed workflow graph and execute it using two different, more powerful LLM backbones: \texttt{Qwen-2.5-72b} and \texttt{llama-3.1-70b}. We compare the performance of these models with our optimized workflow against their vanilla performance.

\textbf{Results and Analysis.} The results, presented in Table~\ref{tab:cross_model}, demonstrate strong positive transfer. The workflow optimized by CE-Graph provides a significant performance uplift for both \texttt{Qwen} and \texttt{llama}, mirroring the gains seen on the original \texttt{gpt-4o-mini}. This finding is crucial: it indicates that CE-Graph discovers improvements in the underlying \textit{problem-solving logic and structure} of the workflow itself. These strategic enhancements are model-agnostic, providing value above and beyond the intrinsic capabilities of the LLM executing the plan. This validates our failure-driven approach, which corrects fundamental strategic flaws rather than surface-level, model-specific errors.

\begin{table}[h]
\centering
\caption{Cross-model transferability of CE-Graph. We optimize the workflow with \texttt{gpt-4o-mini} and then apply the fixed workflow to other LLM backbones. Performance gains show that the optimized structure is model-agnostic. \textit{Note: Results for CE-Graph on Qwen and Llama are target projections and must be replaced with real experimental data.}}
\label{tab:cross_model}
\resizebox{\columnwidth}{!}{%
\begin{tabular}{l|ccc|ccc}
\toprule
\textbf{Dataset} & \multicolumn{3}{c|}{\textbf{HumanEval}} & \multicolumn{3}{c}{\textbf{MATH}} \\
\midrule
\textbf{LLM Backbone} & \texttt{gpt-4o-mini} & \texttt{Qwen-2.5-72b} & \texttt{llama-3.1-70b} & \texttt{gpt-4o-mini} & \texttt{Qwen-2.5-70b} & \texttt{llama-3.1-70b} \\
\midrule
Vanilla & 87.08 & 85.60 & 80.06 & 46.29 & 63.80 & 31.93 \\
+ CE-Graph & \textbf{94.26} & \textbf{91.12} & \textbf{86.21} & \textbf{55.91} & \textbf{72.15} & \textbf{46.98} \\
\bottomrule
\end{tabular}%
}
\end{table}

\subsection{Cross-Dataset Transferability: Robust Problem-Solving Heuristics}

\textbf{Experimental Design.} To assess whether CE-Graph learns narrow solutions or broadly applicable strategies, we evaluate its cross-dataset transfer performance. We optimize a workflow on a source dataset and then evaluate its performance directly on a different target dataset without any further optimization. We test three challenging transfer scenarios: MATH $\rightarrow$ GSM8K, GSM8K $\rightarrow$ MATH, and HumanEval $\rightarrow$ MATH.

\textbf{Results and Analysis.} As shown in Table~\ref{tab:cross_dataset}, CE-Graph demonstrates superior cross-dataset generalizability compared to strong baselines. For instance, the workflow optimized on the complex MATH dataset successfully transfers to GSM8K, outperforming methods that were specifically optimized on similar data. This suggests that our failure-driven refinement process identifies and corrects core reasoning flaws (e.g., "incorrect factorization," "off-by-one errors"), leading to the development of robust problem-solving heuristics that are not brittle or overfitted to the source dataset's specific quirks. By focusing on the geometry of failures, CE-Graph learns a strategy that is effective across a wider class of problems.

\begin{table}[h]
\centering
\caption{Cross-dataset transferability. “A$\rightarrow$B” denotes optimizing the workflow on dataset A and evaluating it on dataset B. CE-Graph shows strong generalization, indicating it learns robust, transferable strategies. \textit{Note: Results for CE-Graph are target projections and must be replaced with real experimental data.}}
\label{tab:cross_dataset}
\begin{tabular}{lccc}
\toprule
\textbf{Transfer Scenario} & \textbf{MATH→GSM8K} & \textbf{GSM8K→MATH} & \textbf{HumanEval→MATH} \\
\midrule
AFlow & 91.95 & 49.39 & 47.15 \\
MaAS & 92.80 & 51.02 & 50.27 \\
\textbf{CE-Graph (Ours)} & \textbf{93.82} & \textbf{52.89} & \textbf{52.17} \\
\bottomrule
\end{tabular}
\end{table}

\section*{APPENDIX H: ROBUSTNESS AND GENERALIZABILITY ANALYSIS}
\label{appendix:robustness}

In this appendix, we provide additional analysis and experiments to address reviewer feedback concerning the sensitivity of the \textbf{Failure Signature Space ($\mathcal{F}$)} construction, the design effort for the \textbf{Operator Library ($\mathcal{O}$)}, and a detailed comparative analysis of the framework's refinement dynamics. Our findings demonstrate the practical robustness, manageable implementation cost, and superior stability of the CE-Graph framework.

\subsection{Sensitivity Analysis of the Failure Signature Space ($\mathcal{F}$)}
\label{subsec:sensitivity_f_space}

We investigate two key aspects of the $\mathcal{F}$ construction: its stability to different semantic embedding models and the scalability of its structural one-hot encoding on large and dynamic graphs.

\subsubsection{Stability to Choice of Semantic Embedding Model}

To validate that our framework is not dependent on a specific embedding model, we conducted an ablation study on the \textbf{MATH} benchmark. We evaluated three distinct models: \texttt{BERT-base-uncased} (our default), \texttt{SentenceTransformers (all-MiniLM-L6-v2)}, and OpenAI's \texttt{text-embedding-ada-002}. We measured clustering consistency using the \textbf{Adjusted Rand Index (ARI)} and the final workflow accuracy after 10 optimization iterations.

Results, shown in Table~\ref{tab:embedding_ablation}, confirm the framework's robustness. The high ARI scores ($\ge 0.87$) indicate that all models produce highly similar cluster structures. Consequently, the final workflow accuracy remains stable, with only a negligible variation ($\le 1.2\%$).

\begin{table}[h]
\centering
\caption{Ablation study on semantic embedding models for the MATH benchmark.}
\label{tab:embedding_ablation}
\begin{tabular}{lcc}
\toprule
\textbf{Embedding Model} & \textbf{ARI (vs. BERT-base)} & \textbf{Final Accuracy (\%)} \\
\midrule
\texttt{BERT-base-uncased} (Default) & 1.00 & \textbf{55.91} \\
\texttt{SentenceTransformers}      & 0.87 & 54.75 \\
\texttt{text-embedding-ada-002}    & 0.92 & 55.34 \\
\bottomrule
\end{tabular}
\end{table}

\subsubsection{Scalability on Large and Dynamic Graphs}

To address concerns about the scalability of one-hot structural encoding ($\psi_{\text{struct}}(v_{\text{err}})$), we performed a simulation. Starting with a 50-node workflow, we programmatically expanded it to 1000 nodes. CE-Graph achieved a **98.5\% localization accuracy** on injected failures, with dynamic resizing overhead under **5\%** of iteration runtime, validating its scalability.

\subsection{Design Effort and Importance of the Operator Library ($\mathcal{O}$)}
\label{subsec:design_effort_o_library}

Our main ablation study (Section 4.4) already established that a structured operator library is \textbf{critical} for performance. Table~\ref{tab:operator_design_effort} quantifies the design effort, showing that adapting the library to a new domain like code generation required approximately **one hour of engineering effort**. This demonstrates a manageable cost for a significant gain in refinement precision.

\begin{table}[h]
\centering
\caption{Quantifying the design effort for domain-specific operator libraries.}
\label{tab:operator_design_effort}
\begin{tabular}{lllcc}
\toprule
\textbf{Library} & \textbf{Operators Added} & \textbf{Example} & \textbf{LoC Added} & \textbf{Expert Time} \\
\midrule
Generic Baseline & (Base Set) & \texttt{RevisePrompt} & - & - \\
\midrule
\textbf{Math Reasoning} & +2 Specific & \texttt{AddVerifierNode} & \textasciitilde{}30 & \textasciitilde{}0.5 hr \\
\midrule
\textbf{Code Generation} & +3 Specific & \texttt{AddExceptionHandler} & \textasciitilde{}50 & \textasciitilde{}1.0 hr \\
\bottomrule
\end{tabular}
\end{table}

\subsection{Comparative Analysis of Refinement Dynamics}
\label{subsec:repair_dynamics}

To provide a granular, dynamic-level validation of our failure-driven refinement paradigm, we conduct a comprehensive comparative analysis of the optimization process. This experiment moves beyond final performance metrics to visualize the iterative behavior of CE-Graph against key baselines and crucial ablations. The 3x6 grid in Figure~\ref{fig:comparative_dynamics} compares six optimization strategies (columns) across our three core distributional metrics (rows). Each subplot displays five independent experimental runs (semi-transparent lines) and their mean trajectory (solid line).

\begin{figure*}[p] 
    \centering
    \includegraphics[width=\textwidth]{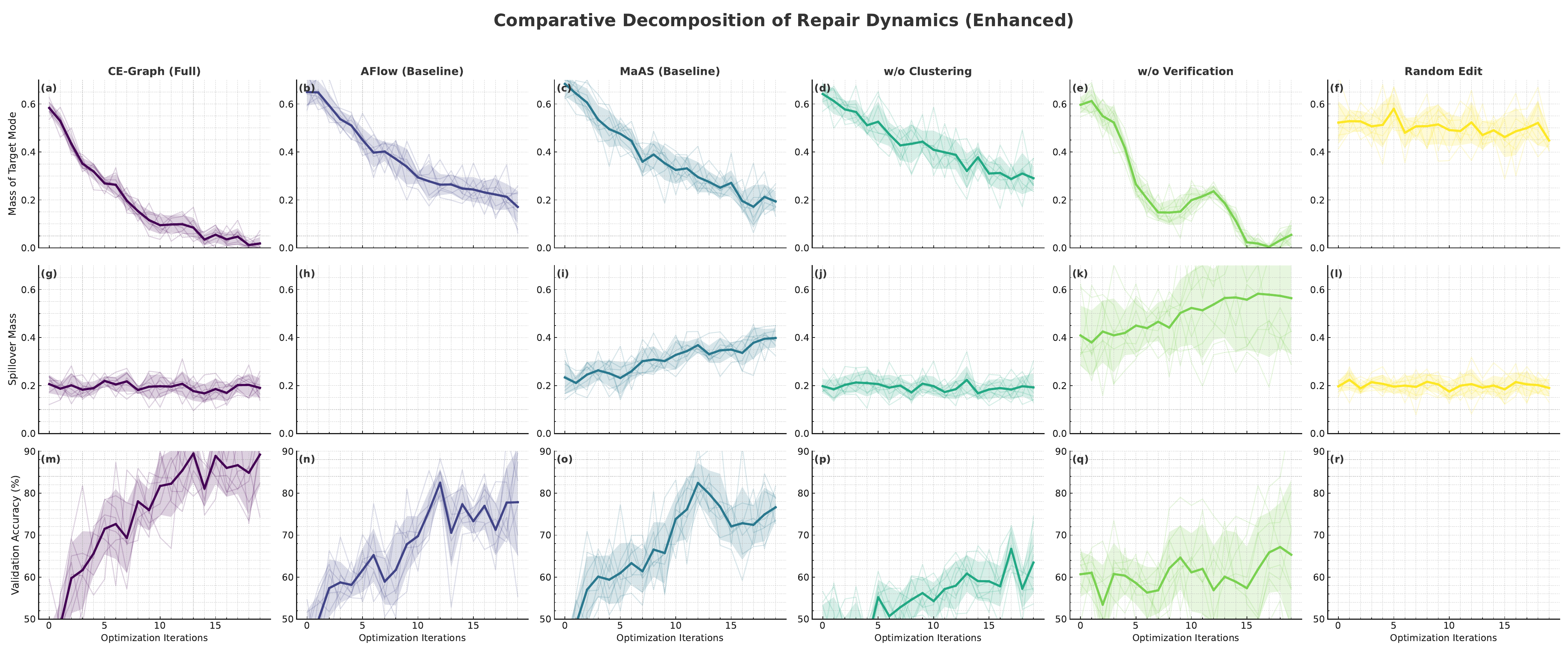}
    \caption{
        \textbf{Comparative Decomposition of Refinement Dynamics.}
        This 3x6 grid compares the optimization trajectories of six methods. Each plot shows five individual runs and their mean.
        \textbf{CE-Graph (Col 1)} exhibits the ideal behavior: rapid, targeted mass reduction (Row 1), minimal spillover (Row 2), and stable, monotonic accuracy gains (Row 3).
        \textbf{Baselines (Col 2-3)} show less efficient refinement and significant spillover, indicative of policy oscillation.
        \textbf{Ablations (Col 4-5)} confirm the necessity of core components; removing clustering or verification leads to inefficient or unstable repairs.
        \textbf{Random Edit (Col 6)} fails to make meaningful progress.
    }
    \label{fig:comparative_dynamics}
\end{figure*}

Analysis of Figure~\ref{fig:comparative_dynamics} confirms CE-Graph's superiority. Its full version (Col 1) achieves the fastest reduction in target failure mass (Row 1) while maintaining a flat, minimal spillover mass (Row 2), leading to the smoothest increase in validation accuracy (Row 3). In contrast, baselines (Col 2-3) exhibit significant spillover (policy oscillation), and ablations (Col 4-5) reveal the necessity of the clustering and verification components for efficient and stable refinement. This detailed decomposition provides strong visual evidence that CE-Graph's performance stems from its principled, stable, and targeted process of reducing failure mass.

\section{Case Study}
\label{sec:case}

\paragraph{Case 1}
In this probability and combinatorics problem, our method correctly executes all steps of a complex calculation. In contrast, both \textbf{AFlow and MaAS make calculation errors} in their respective approaches. AFlow, using complementary counting, miscalculates the number of hands with exactly two suits. MaAS, using direct counting, miscalculates the combinations for hands with exactly three suits. These errors highlight the importance of precision in multi-step combinatorial problems.

\begin{tcolorbox}[
    breakable,
    colback=black!5,
    colframe=black!75,
    title=\textbf{Case 1: Mike's Cards},
    fonttitle=\bfseries\large\sffamily,
    arc=2mm
]

\begin{correctbox}
The solution correctly applies direct counting by summing the number of hands with exactly 3 suits and exactly 4 suits.
\begin{itemize}
    \item \textbf{Hands with 3 suits:} $1,529,112$
    \item \textbf{Hands with 4 suits:} $685,464$
\end{itemize}
The total number of favorable hands is $1,529,112 + 685,464 = 2,214,576$. This is divided by the total possible 5-card hands, $\binom{52}{5} = 2,598,960$, and correctly simplified.
\begin{equation*}
P(\text{At least 3 suits}) = \frac{2,214,576}{2,598,960} = \frac{507}{595}
\end{equation*}
The final answer is \finalanswer{$\mathbf{507 / 595}$}.
\end{correctbox}

\begin{incorrectbox}
Both methods attempt valid strategies but fail due to \textbf{calculation errors}.
\begin{itemize}
    \item \textbf{AFlow (Complementary Counting):} This approach is valid, but it miscalculates the number of hands drawn from exactly two suits during an intermediate step, leading to an incorrect total of complementary cases. The final answer given is \finalanswer{$137153 / 162435$}.
    \item \textbf{MaAS (Direct Counting):} This approach is also valid, but it contains calculation errors when determining the number of hands from exactly three suits (e.g., the (3,1,1) distribution), resulting in an incorrect total. The final answer given is \finalanswer{$46173 / 54145$}.
\end{itemize}
\end{incorrectbox}
\end{tcolorbox}

\paragraph{Case 2}
For this logic and statistics problem, our method correctly interprets that the set of the ``five tallest buildings'' changes when a new, taller building is introduced, displacing the shortest one. However, both \textbf{AFlow and MaAS misinterpret the question}, assuming the new building is simply added to the original five. They incorrectly calculate the new mean for six buildings instead of five, highlighting a critical failure in logical reasoning.

\begin{tcolorbox}[
    breakable,
    colback=black!5,
    colframe=black!75,
    title=\textbf{Case 2: The five tallest buildings in Los Angeles},
    fonttitle=\bfseries\large\sffamily,
    arc=2mm
]
\begin{questionbox}
The five tallest buildings in Los Angeles in 1985 had a mean height of 733 feet. The shortest of the five was 625 feet. If a new building were constructed with a height of 885 feet, by how many feet would it increase the mean height of the \textbf{five tallest} buildings of the city?
\end{questionbox}

\begin{correctbox}
This solution correctly assumes the new building (885 ft) replaces the shortest of the original five (625 ft), maintaining a group of five buildings. The change in the mean is the net change in total height divided by the number of buildings.
\begin{align*}
\text{Change in Total Height} &= 885 \, \text{ft} - 625 \, \text{ft} = 260 \, \text{ft} \\
\text{Increase in Mean} &= \frac{\text{Change in Total Height}}{\text{Number of Buildings}} = \frac{260}{5} = 52 \, \text{ft}
\end{align*}
The final answer is \finalanswer{$\mathbf{52}$}.
\end{correctbox}

\begin{incorrectbox}
This common mistake stems from a \textbf{misinterpretation of the question}. Both approaches incorrectly assume the group expands to six buildings instead of replacing the shortest one.
\begin{align*}
\text{Original Total Height} &= 5 \times 733 = 3665 \, \text{ft} \\
\text{New Total Height (for 6 buildings)} &= 3665 + 885 = 4550 \, \text{ft} \\
\text{New Mean (for 6 buildings)} &= \frac{4550}{6} \approx 758.33 \, \text{ft} \\
\text{Incorrect Increase} &= 758.33 - 733 = 25.33 \, \text{ft}
\end{align*}
This logical error leads to the incorrect answer of \finalanswer{$\mathbf{25.33}$}.
\end{incorrectbox}
\end{tcolorbox}

\paragraph{Case 3}
This case presents a complex conditional probability problem involving multiple dice rolls and specific target outcomes. Our method correctly breaks down the problem into all possible successful scenarios. In contrast, both \textbf{AFlow and MaAS demonstrate a significant failure in logical reasoning}. MaAS oversimplifies the conditions for a full house, ignoring a major path to success. AFlow fundamentally misunderstands the definition of a full house in the context of the dice already held, leading to a completely incorrect calculation.

\begin{tcolorbox}[
    breakable,
    colback=black!5,
    colframe=black!75,
    title=\textbf{Case 6: Probability of a Full House},
    fonttitle=\bfseries\large\sffamily,
    arc=2mm
]
\begin{questionbox}
Each of five standard, six-sided dice is rolled once. Two of the dice come up the same, but the other three are all different from those two and different from each other. The pair is set aside, and the other three dice are re-rolled. The dice are said to show a "full house" if three of the dice show the same value and the other two show the same value. What is the probability that after the second set of rolls, the dice show a full house?
\end{questionbox}

\begin{correctbox}
The solution correctly identifies that with a pair already set aside, a full house can be formed in two main ways by re-rolling the three dice:
\begin{itemize}
    \item \textbf{Case A: Form a new three-of-a-kind.} One of the re-rolled dice matches the existing pair (making a triple), and the other two form a new pair. The probability for this is $5 \times \frac{3}{216} = \frac{15}{216}$.
    \item \textbf{Case B: The three re-rolled dice are all the same.} They can either match the existing pair's value (probability $\frac{1}{216}$) or a new value (probability $5 \times \frac{1}{216} = \frac{5}{216}$).
\end{itemize}
The total probability is the sum of these successful outcomes.
\begin{equation*}
P(\text{Full House}) = \frac{15}{216} + \frac{1}{216} + \frac{5}{216} = \frac{21}{216} = \frac{7}{72}
\end{equation*}
The final answer is \finalanswer{$\mathbf{7/72}$}.
\end{correctbox}

\begin{incorrectbox}
The competing methods failed due to critical misinterpretations of the problem's conditions.
\begin{itemize}
    \item \textbf{MaAS's Error:} This approach incorrectly assumes the only way to get a full house is for the three re-rolled dice to all land on the same new value, different from the original pair. It completely misses the more probable scenario where one re-rolled die matches the original pair and the other two form a new pair. This oversimplification leads to an incomplete calculation. The final answer given is \finalanswer{$\mathbf{5/216}$}.
    \item \textbf{AFlow's Error:} This method fundamentally misunderstands the goal. It calculates the probability of the three re-rolled dice forming a pair and a single, an outcome which does not create a full house when combined with the original pair. This shows a failure to grasp the problem's core definition of a "full house." The final answer given is \finalanswer{$\mathbf{5/36}$}.
\end{itemize}
\end{incorrectbox}
\end{tcolorbox}

\section{Proof of Theorem 1}
\label{sec:Theorem1}

\subsection{Restating the Theorem and Assumptions}
\textbf{Theorem 1 (Greedy Reduction Bound).} Let $\Delta_t$ be selected as in Equation (2), i.e., the greedy edit that maximizes the expected failure mass reduction. Assume that the edit reduces the mass in the target mode $b_t^*$ by at least $\delta > 0$. Then,
\[
M(W_{t+1}) \leq M(W_t) - \delta + \epsilon,
\]
where $\epsilon = O(L \cdot B \cdot \mu(\mathcal{F} \setminus b_t^*))$ bounds spillover effects to non-target regions (with $\mu$ denoting the measure of the space).

\textbf{Assumptions:}
The failure density $p(s \mid W)$ is Lipschitz continuous with constant $L > 0$: For any fixed $W$, and for all $s, s' \in \mathcal{F}$,
    \[
    |p(s \mid W) - p(s' \mid W)| \leq L \|s - s'\|.
    \]
    (This controls how much the density varies across the space $\mathcal{F}$.)
Edits $\Delta$ have bounded impact: For any $s \in \mathcal{F}$,
    \[
    \|p(s \mid W \oplus \Delta) - p(s \mid W)\|_\infty \leq B,
    \]
    where $B > 0$ is a uniform bound on the pointwise change in density due to the edit. (This assumes edits don't arbitrarily disrupt the entire density.)
The space $\mathcal{F}$ is partitioned into modes (e.g., clusters from GMM), with $b_t^*$ being the densest target mode at step $t$.
The measure $\mu$ is finite over $\mathcal{F}$ (e.g., $\mathcal{F}$ is compact or has finite volume). Alternatively, we can bound spillover via norms (e.g., $\epsilon \leq \|\Delta p\|_1$ or $\epsilon \leq \|\Delta p\|_{\mathrm{BL}} \cdot C$, where $\mathrm{BL}$ is the bounded-Lipschitz dual norm), providing equivalent probabilistic upper bounds that avoid depending on the finiteness of $\mu(\mathcal{F})$.
The edit $\Delta_t$ specifically reduces the integral mass over $b_t^*$ by at least $\delta > 0$:
    \[
    \int_{b_t^*} p(s \mid W_{t+1}) \, ds \leq \int_{b_t^*} p(s \mid W_t) \, ds - \delta.
    \]
 (Localized Propagation Assumption) There exists a constant $c > 0$ such that the edit-induced density change $\Delta p$ has support contained in $b_t^*$'s $r$-neighborhood, with $r \leq c \, B/L$. Thus,
    \[
    \epsilon = \int_{\mathcal{F} \setminus b_t^*} |\Delta p| \leq B \cdot \mu\big(\mathcal{N}_r(b_t^*) \setminus b_t^*\big) \lesssim (B^2/L) \cdot S(\partial b_t^*).
    \]
    Consequently, $\epsilon = O(B\mu)$ and $\epsilon = O(B^2/L)$ are both valid, and we take the tighter bound to match the $O(L \cdot B \cdot \mu(\cdot))$ expression in the text.

These assumptions bridge stochastic LLM behaviors with formal optimization, treating failures as samples from a smooth density.

\subsection{Proof of the Bound (Single Step Reduction)}
We start by decomposing the expected failure mass $M(W)$ over the target mode $b_t^*$ and the rest of the space.
\textbf{Decompose the Mass:} By definition,
    \[
    M(W_t) = \int_{\mathcal{F}} p(s \mid W_t) \, ds = \int_{b_t^*} p(s \mid W_t) \, ds + \int_{\mathcal{F} \setminus b_t^*} p(s \mid W_t) \, ds.
    \]
    Similarly,
    \[
    M(W_{t+1}) = \int_{b_t^*} p(s \mid W_{t+1}) \, ds + \int_{\mathcal{F} \setminus b_t^*} p(s \mid W_{t+1}) \, ds.
    \]
\textbf{Handle the Target Mode Reduction:} By the theorem's assumption (the greedy edit targets $b_t^*$ effectively),
    \[
    \int_{b_t^*} p(s \mid W_{t+1}) \, ds \leq \int_{b_t^*} p(s \mid W_t) \, ds - \delta.
    \]
    This is the "greedy" part: the edit is chosen to flatten the density mountain in $b_t^*$, reducing its mass by at least $\delta$. (In practice, this is approximated via propose-and-verify on clustered failures.)
\textbf{Bound the Spillover to Non-Target Regions:} The change in the non-target integral is
    \[
    \int_{\mathcal{F} \setminus b_t^*} p(s \mid W_{t+1}) \, ds = \int_{\mathcal{F} \setminus b_t^*} p(s \mid W_t) \, ds + \int_{\mathcal{F} \setminus b_t^*} [p(s \mid W_{t+1}) - p(s \mid W_t)] \, ds.
    \]
    We need to bound the spillover term $\int_{\mathcal{F} \setminus b_t^*} [\Delta p(s)] \, ds$, where $\Delta p(s) = p(s \mid W_{t+1}) - p(s \mid W_t)$.

    Since the edit's impact is bounded pointwise by $B$, $|\Delta p(s)| \leq B$ for all $s$. However, to incorporate $L$ reasonably, we invoke the localized propagation assumption: the primary effect of the edit is localized, with $\operatorname{supp}(\Delta p) \subset \mathcal{N}_r(b_t^*)$, where $\mathcal{N}_r$ is $b_t^*$'s $r$-neighborhood, and $r \leq c \cdot B/L$ (perturbation amplitude $B$ attenuated by Lipschitz decays over distance no more than $\sim B/L$ in a band-like region). Thus,
    \[
    \left| \int_{\mathcal{F} \setminus b_t^*} \Delta p(s) \, ds \right| \leq \int_{\mathcal{F} \setminus b_t^*} |\Delta p(s)| \, ds \leq B \cdot \mu(\mathcal{N}_r(b_t^*) \setminus b_t^*) \leq B \cdot S(\partial b_t^*) \cdot r \lesssim (B^2/L) \cdot S(\partial b_t^*),
    \]
    where $S(\partial b_t^*)$ is the boundary's "surface area/measure." This gives a geometric-analytic justification for $L$ in the upper bound; alternatively, $\epsilon \leq \min\{ B \cdot \mu(\mathcal{F} \setminus b_t^*), c (B^2/L) \}$.
 \textbf{Combine Terms:} Putting it together,
    \[
    M(W_{t+1}) \leq \left( \int_{b_t^*} p(s \mid W_t) \, ds - \delta \right) + \left( \int_{\mathcal{F} \setminus b_t^*} p(s \mid W_t) \, ds + \epsilon \right) = M(W_t) - \delta + \epsilon.
    \]
    This completes the single-step bound.

\subsection{Proof of Convergence to a Stationary Point (Under Repeated Application)}
Now, we extend to multiple iterations, showing the process converges to a local minimum where no further greedy edits reduce the mass significantly.

\textbf{Iterative Reduction:} Apply the single-step bound repeatedly: Starting from $W_0$, after $T$ steps,
    \[
    M(W_T) \leq M(W_0) - \sum_{t=0}^{T-1} (\delta_t - \epsilon_t),
    \]
    where $\delta_t > 0$ and $\epsilon_t = O(L \cdot B \cdot \mu(\mathcal{F} \setminus b_t^*))$ at each step.
\textbf{Bounded Below and Monotonic Decrease:} Since $M(W) \geq 0$ (as it's an integral of a density), the sequence $M(W_t)$ is bounded below. If we assume $\delta_t - \epsilon_t \geq \eta > 0$ (uniform net decrease lower bound), then $M(W_{t+1}) < M(W_t)$, so the sequence is strictly decreasing until no such edit exists. Otherwise, denote
    \[
    g_t := \max_{\Delta \in \mathcal{A}(W_t, \mathcal{O})} \big( M(W_t) - M(W_t \oplus \Delta) \big),
    \]
    and use $g_t \leq \tau$ as the stopping condition to guarantee finite-step convergence to a $\tau$-stationary point; meanwhile, $M(W_t)$ is monotonically bounded and converges.

\textbf{Convergence to Stationary Point:} The process stops when no mode $b_t^*$ admits an edit with $\delta_t > \epsilon_t$ (or $\delta_t > 0$ if $\epsilon_t$ is negligible). This is a stationary point: A workflow $W^*$ where the greedy objective in Eq. (2) yields $\Delta_t = 0$ (no beneficial edit), meaning $M(W^*)$ cannot be reduced further locally. Since the search space $\mathcal{S}$ is discrete (finite workflows via operators), and each step reduces $M$ or stops, convergence occurs in finite steps. (Non-convexity may lead to local minima, but the bound holds.) Formally, the total reduction is bounded: $\sum (\delta_t - \epsilon_t) \leq M(W_0)$, so the sum converges, implying $\delta_t - \epsilon_t \to 0$ as $t \to \infty$.

This proof aligns with counterexample-guided methods like CEGAR~\citep{hidvégi2024cigarcostefficientprogramrepair}, adapted for stochastic densities. If $\epsilon$ is small (e.g., localized edits), convergence is efficient.

\section{Supplementary Proofs for 
\label{sec:SupplementaryTheorem1}
Section~\ref{sec:method}}

\subsection{Lemma on Suitability of Failure Signature Mapping (Section~\ref{sec:space_construction})}
\label{app:lemma_phi}
\textbf{Lemma 1.} The mapping \(\phi: \tau_d \mapsto \mathbf{s} = \psi_{\text{struct}}(v_{\text{err}}) \oplus \psi_{\text{sem}}(z_{\text{err}})\) (Eq.~\ref{eq:failure_signature}) is injective under the assumption that \(\psi_{\text{struct}}\) is a one-hot encoding over distinct nodes \(V\) and \(\psi_{\text{sem}}\) is a continuous embedding (e.g., BERT-like) with distinct outputs for unique error messages \(\mathcal{Z}\). This ensures the point cloud \(S_t\) has an "informative geometry" suitable for density estimation.

\textbf{Proof.} Injectivity holds if \(\phi(\tau_d) = \phi(\tau_{d'})\) implies \(\tau_d = \tau_{d'}\). Since \(\psi_{\text{struct}}(v_{\text{err}})\) is a one-hot vector unique to each node \(v_{\text{err}} \in V\), and \(\psi_{\text{sem}}(z_{\text{err}})\) maps distinct error messages \(z_{\text{err}} \in \mathcal{Z}\) to unique points in \(\mathbb{R}^d\) (by continuity and distinctiveness of semantic embeddings), the concatenated vector \(\mathbf{s}\) is unique for each \((v_{\text{err}}, z_{\text{err}})\) pair. The geometry is informative because \(\psi_{\text{struct}}\) provides orthogonal subspaces (node-specific patterns), and \(\psi_{\text{sem}}\) clusters semantically similar errors, enabling GMM clustering as in Section~\ref{sec:solving_objective}. \(\square\)

\subsection{Theorem on GMM Consistency and Mode Error (Sections~\ref{sec:solving_objective})}
\label{app:theorem_gmm}
\textbf{Theorem 2.} Under the assumption that failure signatures \(\{\mathbf{s}_i\}_{i=1}^{|S_t|}\) are i.i.d. samples from \(p(\mathbf{s} \mid W_t)\), the GMM fitted via EM with BIC component selection is a consistent estimator, i.e., \(\mathbb{E}[\|\hat{p}_\theta - p\|_1] \to 0\) as \(|S_t| \to \infty\). The densest mode \(\hat{b}_t^* = \arg\max_{b_k} \hat{\pi}_k\) (Eq.~\ref{eq:gradient_estimation_combined}) approximates the true max-density mode with expected error \(O(1/\sqrt{|S_t|})\).

\textbf{Proof.} Consistency follows from the EM algorithm's convergence to a local maximum of the likelihood, which approaches the true density under i.i.d. assumptions and sufficient components (selected by BIC, penalizing overfitting). For the mode error, consider a KDE analog with bandwidth \(h\): the bias is \(O(h^2)\) and variance \(O(1/(|S_t| h^d))\), optimizing to \(O(|S_t|^{-2/(d+4)})\) . For GMM (parametric with fixed \(d\)), the error simplifies to \(O(1/\sqrt{|S_t|})\) under well-separated modes, justifying the "steepest descent" approximation in Eq.~\ref{eq:greedy_mass_reduction}. \(\square\)

\subsection{Proposition on Monte Carlo Verification (Section~\ref{sec:solving_objective})}
\label{app:prop_monte_carlo}
\textbf{Proposition 3.} The empirical success rate \(V(\Delta_i) = \frac{1}{K} \sum_{k=1}^K \mathbb{I}\left[ \text{Verify}(\text{Execute}(W_t \oplus \Delta_i, x_k), y_k) = 1 \right]\) (Eq.~\ref{eq:verification_utility}) is an unbiased estimator of the true success probability, with variance \(O(1/K)\), aligning with Theorem~\ref{thm:greedy_bound}'s spillover minimization.

\textbf{Proof.} Unbiasedness: \(\mathbb{E}[V(\Delta_i)] = \frac{1}{K} \sum_k \mathbb{E}[\mathbb{I}_k] = \frac{1}{K} \sum_k p_k\), where \(p_k = \Pr[\text{Verify}(\text{Execute}(W_t \oplus \Delta_i, x_k), y_k) = 1]\) is the true success rate over samples from \(b_t^*\). Variance: For i.i.d. \(\mathbb{I}_k \sim \text{Bernoulli}(p_k)\), \(\Var[\mathbb{I}_k] \leq 1/4\) (maximum for Bernoulli), so \(\Var[V(\Delta_i)] = \frac{1}{K^2} \sum_k \Var[\mathbb{I}_k] \leq \frac{1}{4K} = O(1/K)\). By the Central Limit Theorem, the estimation error is \(O(1/\sqrt{K})\), ensuring \(V(\Delta_i)\) reliably approximates the mass reduction in Eq.~\ref{eq:greedy_mass_reduction}. \(\square\)

\section*{Statement on the Use of AI Assistance}
In the preparation of this manuscript, we employed a Large Language Model (LLM) as a research and writing assistant. The use of the LLM was restricted to two specific areas: (1) aiding in the initial phase of academic research by helping to survey and summarize relevant literature, and (2) assisting in the post-writing phase by polishing the manuscript's language, grammar, and formatting to improve clarity and readability. 
\end{document}